%%
%% This is file `sample-sigconf.tex',
%% generated with the docstrip utility.
%%
%% The original source files were:
%%
%% samples.dtx  (with options: `sigconf')
%% 
%% IMPORTANT NOTICE:
%% 
%% For the copyright see the source file.
%% 
%% Any modified versions of this file must be renamed
%% with new filenames distinct from sample-sigconf.tex.
%% 
%% For distribution of the original source see the terms
%% for copying and modification in the file samples.dtx.
%% 
%% This generated file may be distributed as long as the
%% original source files, as listed above, are part of the
%% same distribution. (The sources need not necessarily be
%% in the same archive or directory.)
%%
%%
%% Commands for TeXCount
%TC:macro \cite [option:text,text]
%TC:macro \citep [option:text,text]
%TC:macro \citet [option:text,text]
%TC:envir table 0 1
%TC:envir table* 0 1
%TC:envir tabular [ignore] word
%TC:envir displaymath 0 word
%TC:envir math 0 word
%TC:envir comment 0 0
%%
%%
%% The first command in your LaTeX source must be the \documentclass command.
\documentclass[sigconf]{}

\copyrightyear{2021}
\acmYear{2021}
\setcopyright{acmcopyright}\acmConference[CIKM '21]{Proceedings of the 30th ACM International Conference on Information and Knowledge Management}{November 1--5, 2021}{Virtual Event, QLD, Australia}
\acmBooktitle{Proceedings of the 30th ACM International Conference on Information and Knowledge Management (CIKM '21), November 1--5, 2021, Virtual Event, QLD, Australia}
\acmPrice{15.00}
\acmDOI{10.1145/3459637.3482246}
\acmISBN{978-1-4503-8446-9/21/11}
% Authors, replace the red X's with your assigned DOI string during the rightsreview eform process.

\usepackage{url}
\usepackage[utf8]{inputenc}
\usepackage{graphicx}
\usepackage{amsmath}
\usepackage{amsfonts}

\usepackage{booktabs}
\usepackage[normalem]{ulem}
\useunder{\uline}{\ul}{}
\usepackage{subcaption}
\usepackage{multirow}
\usepackage{algorithm}
\usepackage{algorithmic}
\usepackage{cleveref}
\usepackage{diagbox}
\urlstyle{same}
\newtheorem{theorem}{Theorem}[section]

%%
%% Submission ID.
%% Use this when submitting an article to a sponsored event. You'll
%% receive a unique submission ID from the organizers
%% of the event, and this ID should be used as the parameter to this command.
%%\acmSubmissionID{123-A56-BU3}

%%
%% The majority of ACM publications use numbered citations and
%% references.  The command \citestyle{authoryear} switches to the
%% "author year" style.
%%
%% If you are preparing content for an event
%% sponsored by ACM SIGGRAPH, you must use the "author year" style of
%% citations and references.
%% Uncommenting
%% the next command will enable that style.
%%\citestyle{acmauthoryear}

%%
%% end of the preamble, start of the body of the document source.
\settopmatter{printacmref=true}
\begin{document}
\fancyhead{}
%%
%% The "title" command has an optional parameter,
%% allowing the author to define a "short title" to be used in page headers.
\title{DCAP: Deep Cross Attentional Product Network for User Response Prediction}

%%
%% The "author" command and its associated commands are used to define
%% the authors and their affiliations.
%% Of note is the shared affiliation of the first two authors, and the
%% "authornote" and "authornotemark" commands
%% used to denote shared contribution to the research.
\author{Zekai Chen}
% \orcid{1234-5678-9012}
% \authornotemark[1]
\affiliation{%
  \institution{Department of Computer Science}
  \institution{George Washington University}
  \streetaddress{800 22nd NW St}
  \city{Washington, DC}
%   \state{Ohio}
  \country{USA}
  \postcode{20052}
}
\email{zech\_chan@gwu.edu}

\author{Fangtian Zhong}
\affiliation{%
  \institution{Department of Computer Science}
  \institution{George Washington University}
  \streetaddress{800 22nd NW St}
  \city{Washington, DC}
%   \state{Ohio}
  \country{USA}
  \postcode{20052}
}
\email{squareky\_zhong@gwu.edu}

\author{Zhumin Chen}
\affiliation{%
  \institution{School of Computer Science and Technology}
  \institution{Shandong University \country{China}}
}
\email{chenzhumin@sdu.edu.cn}

\author{Xiao Zhang}
\affiliation{%
  \institution{School of Computer Science and Technology}
  \institution{Shandong University \country{China}}
}
\email{xiaozhang@sdu.edu.cn}

\author{Robert Pless}
\affiliation{%
  \institution{Department of Computer Science}
  \institution{George Washington University}
  \streetaddress{800 22nd NW St}
  \city{Washington, DC}
%   \state{Ohio}
  \country{USA}
  \postcode{20052}
}
\email{pless@gwu.edu}

\author{Xiuzhen Cheng}
\affiliation{%
  \institution{Department of Computer Science}
  \institution{George Washington University}
  \streetaddress{800 22nd NW St}
  \city{Washington, DC}
%   \state{Ohio}
  \country{USA}
  \postcode{20052}
}
\email{cheng@gwu.edu}

% \author{John Smith}
% \affiliation{%
%   \institution{The Th{\o}rv{\"a}ld Group}
%   \streetaddress{1 Th{\o}rv{\"a}ld Circle}
%   \city{Hekla}
%   \country{Iceland}}
% \email{jsmith@affiliation.org}

% \author{Julius P. Kumquat}
% \affiliation{%
%   \institution{The Kumquat Consortium}
%   \city{New York}
%   \country{USA}}
% \email{jpkumquat@consortium.net}
% \author{PaperID: 598}
%%
%% By default, the full list of authors will be used in the page
%% headers. Often, this list is too long, and will overlap
%% other information printed in the page headers. This command allows
%% the author to define a more concise list
%% of authors' names for this purpose.
% \renewcommand{\shortauthors}{Trovato and Tobin, et al.}

%%
%% The abstract is a short summary of the work to be presented in the
%% article.
\begin{abstract}
User response prediction, which aims to predict the probability that a user will provide a predefined positive response in a given context such as clicking on an ad or purchasing an item, is crucial to many industrial applications such as online advertising, recommender systems, and search ranking. For these tasks and many other machine learning tasks, an indispensable part of success is feature engineering, where cross features are a significant type of feature transformations. However, due to the high dimensionality and super sparsity of the data collected in these tasks, handcrafting cross features is inevitably time expensive. Prior studies in predicting user response leveraged the feature interactions by enhancing feature vectors with products of features to model second-order or high-order cross features, either explicitly or implicitly. However, these existing methods can be hindered by not learning sufficient cross features due to model architecture limitations or modeling all high-order feature interactions with equal weights. Different features should contribute differently to the prediction, and not all cross features are with the same prediction power. 

This work aims to fill this gap by proposing a novel architecture Deep Cross Attentional Product Network (DCAP), which keeps cross network's benefits in modeling high-order feature interactions explicitly at the vector-wise level. By computing the inner product or outer product between attentional feature embeddings and original input embeddings as each layer's output, we can model cross features with a higher degree of order as the network's depth increases. We concatenate all the outputs from each layer, which further helps the model capture much information on cross features of different orders. Beyond that, it can differentiate the importance of different cross features in each network layer inspired by the multi-head attention mechanism and Product Neural Network (PNN), allowing practitioners to perform a more in-depth analysis of user behaviors. Additionally, our proposed model can be easily implemented and train in parallel. We conduct comprehensive experiments on three real-world datasets. The results have robustly demonstrated that our proposed model DCAP achieves superior prediction performance compared with the state-of-the-art models. Public codes are available at https://github.com/zachstarkk/DCAP.
\end{abstract}

%%
%% The code below is generated by the tool at http://dl.acm.org/ccs.cfm.
%% Please copy and paste the code instead of the example below.
%%

\begin{CCSXML}
<ccs2012>
   <concept>
       <concept_id>10002951.10003227.10003351</concept_id>
       <concept_desc>Information systems~Data mining</concept_desc>
       <concept_significance>500</concept_significance>
       </concept>
   <concept>
       <concept_id>10010147.10010257.10010321</concept_id>
       <concept_desc>Computing methodologies~Machine learning algorithms</concept_desc>
       <concept_significance>500</concept_significance>
       </concept>
   <concept>
       <concept_id>10002951.10003227.10003447</concept_id>
       <concept_desc>Information systems~Computational advertising</concept_desc>
       <concept_significance>500</concept_significance>
       </concept>
 </ccs2012>
\end{CCSXML}

\ccsdesc[500]{Information systems~Data mining}
\ccsdesc[500]{Computing methodologies~Machine learning algorithms}
\ccsdesc[500]{Information systems~Computational advertising}

%%
%% Keywords. The author(s) should pick words that accurately describe
%% the work being presented. Separate the keywords with commas.
\keywords{user response prediction; recommender system; self-attention; cross feature modeling}

%% A "teaser" image appears between the author and affiliation
%% information and the body of the document, and typically spans the
%% page.

%%
%% This command processes the author and affiliation and title
%% information and builds the first part of the formatted document.
\maketitle

\section{Introduction}
\label{intro}
With the continuous and rapid growth of online service platforms, user response prediction (URP) has played an increasingly important role as the central problem of many online applications, such as online advertising \cite{Liu2015,Juan2016,Gai2017}, recommender systems \cite{Cheng2016,Karatzoglou2017}, and web search \cite{Shan2016}. In online advertising, quantifying user intent allows advertisers or social platforms to target ads' right users. It leads to the judicious use of multi-billion marketing dollars and also renders a pleasant user experience. In recommender systems, correctly predicting the \textit{rating} or \textit{preference} a user would respond to an item can also create a delightful user experience while driving incremental revenue. 

As illustrated in \figurename\ \ref{fig:user response prediction}, based on the user level features collected within a particular historical window of any specific task, the predictive system can estimate in advance how likely a user will provide a predefined positive response, e.g., clicks an ad (also known as click-through rate, CTR), likes a post, purchases an item, etc., in a given context \cite{Menon2011}.

As \figurename\ \ref{fig:user response prediction} also reflects, user-level data collected from these tasks are usually in a multi-field categorical form, for example, $\left[\texttt{Country=\textit{USA}, Gender=\textit{Male}},\cdots\right]$ which is normally transformed into binary features via one-hot encoding. When performing machine learning on solving such prediction tasks, an indispensable part of success is applying feature interactions across raw features, which has been emphasized in related literature \cite{Blondel2016,Zhang2016,Cheng2016,Guo2017}. For instance, young male users may have a higher chance of clicking ads toward video games, which indicates that the \textit{cross feature} regarding \textit{age} and \textit{gender} can be a significant predictor for estimating whether users will click on some certain types of ads. However, the binarized user data leads to a high-dimensional and sparse feature space (e.g., the well-known CTR prediction dataset Criteo\footnote{http://labs.criteo.com/2014/02/kaggle-display-advertising-challenge-dataset/} has a feature dimension of over one million with sparsity over 99\% \cite{Song2019}) where handcrafting powerful cross features are inevitably time expensive, and results can rarely be generalized to unseen high-order feature interactions \cite{Cheng2019}. 

Some pioneering work was then proposed to overcome the limitation by leveraging feature interactions in an automation fashion. For example, Factorization Machines (FMs) \cite{Rendle2010} was proposed to model second-order cross features by parameterizing a cross feature's weight as the constituent features' inner product of the embedding vectors. Other recent developments \cite{Zhang2016,Guo2017,He2017a,Lian2018} have also augmented FMs with deep neural networks (DNNs) to model more expressive high-order feature interactions, which have gained promising results. However, we argue that two remaining challenges can potentially hinder these approaches. First, for models relying on DNNs to achieve the capability of modeling high-order cross features (e.g., DeepFM \cite{Guo2017}, NFM \cite{He2017a}, etc.), an obvious drawback is that deep neural networks only learn feature interactions in an implicit way, which have been shown inefficient in learning multiplicative feature interactions \cite{Beutel2018,Song2019}. It also significantly reduces the model's interpretability. Second, for models able to explicitly model high-order feature interactions (e.g., xDeepFM \cite{Lian2018}, etc.), they are limited by assigning equal weights to factorized embeddings. In most real-business scenarios, it is undeniable that different predictors might have different predicting power, and not all the features contain useful signals for making the forecasting. Note that Attentional Factorization Machine (AFM) \cite{He2017} produces an attention-based pooling layer to differentiate the cross feature importance such that the influence of less useful feature interactions can be compromised by assigning lower weights. Unfortunately, AFM only aims to model second-order feature interactions, which lacks the capability of capturing high-order feature information. 

\begin{figure}
    \centering
    \includegraphics[width=\linewidth]{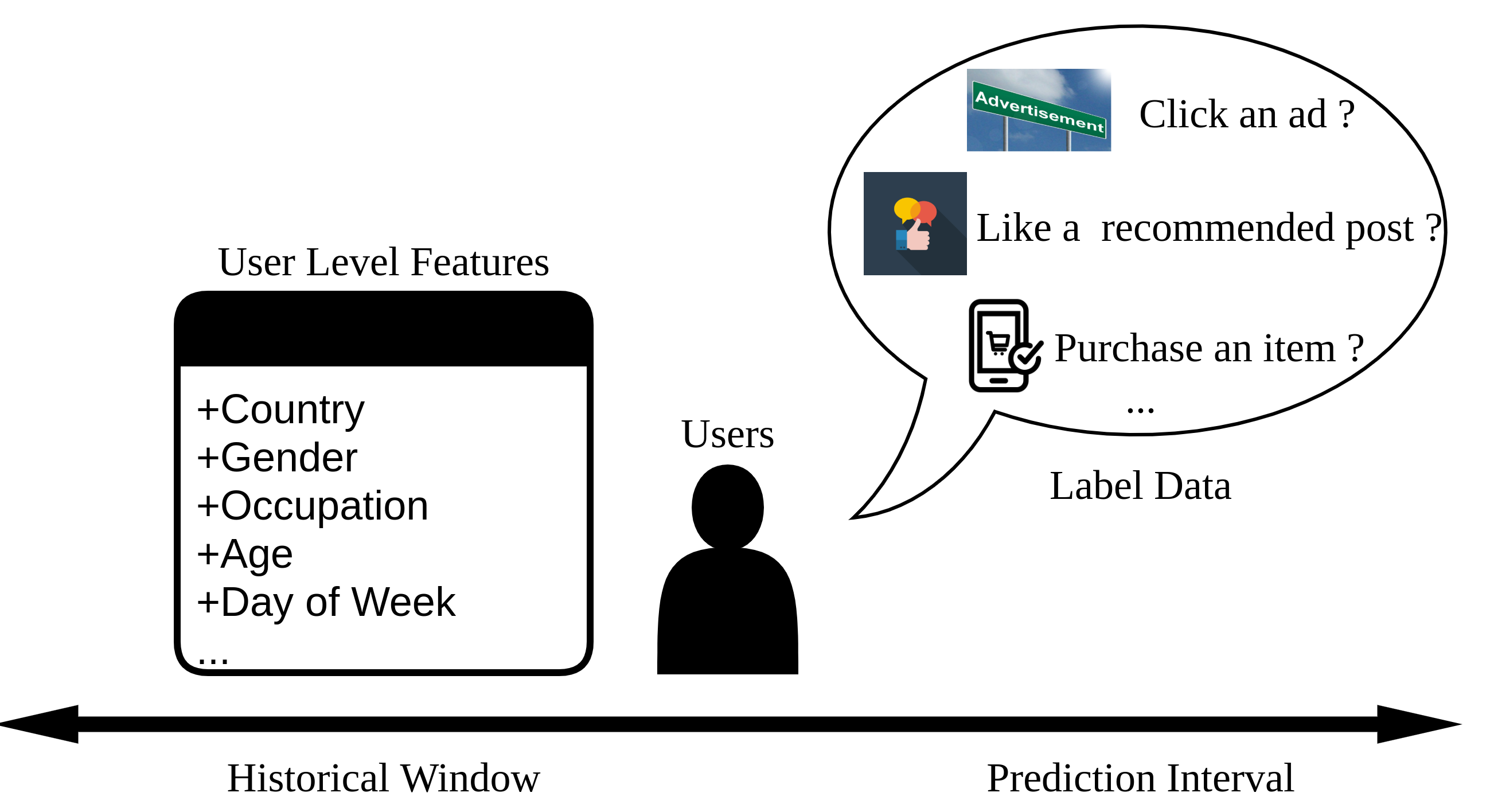}
    \caption{An visualization of illustrating user response prediction tasks.}
    \label{fig:user response prediction}
    \vspace{-0.5cm}
\end{figure}

To overcome these challenges, we propose a novel architecture \textit{\textbf{D}\textit{eep}} \textit{\textbf{C}ross} \textit{\textbf{A}ttentional} \textit{\textbf{P}roduct} network (DCAP) that aims to explicitly learn high-order cross features with discriminated importance at a vector-wise level. Our approach is inspired by the $\textit{cross network}$ \cite{Wang2017} architecture and utilizes the multi-head self-attention mechanism proposed in Transformer \cite{Vaswani2017} which is now the $\textit{defacto}$ state-of-the-art model in broad NLP domains \cite{Devlin2019,Brown2020}. To be specific, we first form the attentional feature embeddings in the first network layer using the self-attention mechanism. We then compute the inner product or outer product between attentional feature embeddings and original input embeddings as each layer's output. By increasing the network's depth, we further generate the cross attentional features across each layer using the last layer's outputs and the original inputs. As the depth of the network increases, the degree of feature interactions grows up correspondingly. Significantly, this attention mechanism greatly enhances our model's interpretability and transparency, allowing practitioners to perform a more in-depth analysis of user behaviors. Inspired by PNN \cite{Qu2017}, for each layer, we represent all cross features in product fashion. To summarize, we make the following contributions:

\begin{itemize}
    \item We propose a novel Deep Cross Attentional Product Network (DCAP) that explicitly models high-order feature interactions with discriminated importance at a vector-wise level in an automation fashion. It can efficiently capture highly nonlinear feature interactions of bounded degree concerning the network's depth, requiring no handcrafted feature engineering or large-scale searching at a tremendous computational cost. 
    \item We concatenate outputs from each layer such that the network outputs consist of all cross features with the degree of order ranging from 1 to the highest, which contains richer predictor information.
    % \item The multi-head attention and product mechanism is good parallelism that can be easily implemented and landing in real industrial scenes. 
    \item We conduct comprehensive experiments on three real-world datasets, and the results have demonstrated that our proposed DCAP achieves superior performance compared with the state-of-the-art models.
\end{itemize}

The remainder of this paper is organized as follows. The related work is first reviewed in Section \ref{related_work}. We further provide some preliminary knowledge in Section \ref{methods}, which is necessary for understanding our model architecture and the inspirations of our design. We then describe our proposed architecture DCAP in Section \ref{methods}. We conduct experiments and report the results in Section \ref{experiments}. For simplicity, we take the click-through rate (CTR) estimation in online advertising as working examples to explore our model's potential effectiveness as it is one of the most critical applications of user response prediction. Finally, we conclude our work in Section \ref{conclusion}. 

\section{Related Work}
\label{related_work}
The problem of user response prediction in machine learning is essentially a binary classification problem with prediction likelihood as the training objective \cite{Menon2011,Qu2017}. It is widely acknowledged that modeling feature interactions are crucial for a good performance \cite{Cheng2016,Guo2017,Wang2017,Lian2018}.
\subsection{Individual Models} 
Rendle \textit{et al.} \cite{Rendle2010} proposed the pioneering work Factorization Machines (FMs) to capture at most second-order feature interactions. It has been demonstrated effective in both recommender systems \cite{He2017,Pan2018} and CTR prediction \cite{Guo2017,Lian2018}. With the success of FMs, different variants of FMs have been proposed. For example, Juan \textit{et al.} \cite{Juan2016} proposed Field-aware Factorization Machine (FFM) to model fine-grained interactions across features within different fields. Cheng \textit{et al.} \cite{Cheng2014} proposed GBFM and He \textit{et al.} \cite{He2017} introduced AFM to capture the importance of different second-order feature interactions. However, all these approaches focused on modeling low-order feature interactions, which are not sufficient for the model to capture high-order information. Blondel \textit{et al.} \cite{Blondel2016} proposed a high-order FMs algorithm (HOFMs) to generalize the capability of FMs for modeling any cross feature at any degree of order. However, it is limited by the polynomial model complexity.

\subsection{Ensemble Models}
As deep neural networks (DNNs) demonstrated the capability to explore high-order hidden patterns, more ensemble models were then proposed to integrate with DNNs as encounter parts to boost the prediction performance. Cheng \textit{et al.} \cite{Cheng2016} developed Wide \& Deep for online recommendation at Google combining linear model and DNN, which achieves a remarkable performance in APP recommendation. He \textit{et al.} \cite{He2017a} introduced NFM, Guo \textit{et al.} \cite{Guo2017} proposed DeepFM, and Zhang \textit{et al.} \cite{Zhang2019} proposed FNFM to further combine FMs with DNNs to gain the capability of modeling implicit high-order feature interactions. Qu \textit{et al.} \cite{Qu2017} proposed Product Neural Network (PNN) that utilizes the inner product and outer product of features to model the inter-field feature interactions followed by DNNs. Wang \textit{et al.} \cite{ Wang2017} devised a $\textit{cross network}$ (DCN) architecture that jointly learn both explicit and implicit high-order feature interactions in an automation fashion. It can learn feature interactions efficiently; however, all interactions come in a bit-wise fashion, bringing more implicitness. More importantly, it is inefficient in learning multiplicative feature interactions \cite{Beutel2018}. Lian \textit{et al.} \cite{Lian2018} combined \textit{compressed interaction network} with a multi-layer perceptron (xDeepFM) to learn high-order feature interactions explicitly at a vector-wise level. Additionally, Cheng \textit{et al.}  \cite{Cheng2019} devised AFN with a logarithmic transformation layer that converts each feature's power in a feature combination into the coefficient to be learned. Despite the promising results, these methods can be hindered due to the low interpretability caused by DNNs' implicitness or assigning equal weights to all the feature interactions without discrimination. 

Moreover, AutoFIS \cite{Liu2020} can automatically identify important feature interactions for factorization models with computational cost just equivalent to training the target model to convergence. Song \textit{et al.} \cite{Song2019} adopted the multi-head attention mechanism (MHA) \cite{Vaswani2017} for modeling feature interactions due to its superiority over modeling attentional pairwise feature correlations. Feng \textit{et al.} \cite{Feng2019} also directly applied this self-attention mechanism in the Deep Session Interest Network (DSIN) for CTR prediction. However, these models' performance can be limited in modeling high-order complicated feature interactions by merely stacking the self-attention blocks. Thus, we propose a model that can fully utilize the MHA mechanism to model explicit high-order feature interactions with discriminated importance.

\section{Methodologies}
\label{methods}
Inspired by the Deep\&Cross Network \cite{Shan2016}, we design this Deep Cross Attentional Product Network (DCAP) with the following considerations: (1) high-order explicit feature interactions at vector-wise level; (2) differentiating the significance of feature interactions with any degree of order; (3) easy parallelization for computation efficiency. This section presents the architecture of our proposed model and analyzes both time and space complexity. We present a comprehensive description of how to learn a low-dimensional dense feature embedding that models high-order cross features with discriminated importance. 

\subsection{Sparse Input and Embedding Layer}
\label{embedding layer}
In web-scale recommender systems or CTR prediction problems, the input features are usually sparse with tremendous dimension and present no clear spatial or temporal correlation. One common approach is transforming the multi-field categorical data into a high-dimensional sparse feature space via field-aware one-hot encoding. An embedding layer is developed upon the raw feature input to compress it to a low dimensional, dense real-value vector representation through a shared latent space. Expressly, we represent each categorical feature with a low-dimensional vector as following:
\begin{equation}
    \mathbf{x}_{i}=\mathbf{V}_{i}\mathbf{e}_{i}
\end{equation}
where $\mathbf{V}_{i}$ is an embedding matrix for field $i$, and $\mathbf{e}_{i}$ is a one-hot vector. If the field is multivalent, we take the field embedding as the sum of feature embedding. The output of the embedding layer is then a wide concatenated vector as:
\begin{equation}
    \mathbf{x} = \left[\mathbf{x}_{1}, \cdots, \mathbf{x}_{n}\right]
\end{equation}
where $n$ denotes the number of feature fields, and $\mathbf{x}_{i}\in\mathcal{R}^{d}$ denotes the embedding of one field.
The embedding layer is illustrated in \figurename\ \ref{fig:embedding layer}. 
\begin{figure}
    \centering
    \includegraphics[width=\linewidth]{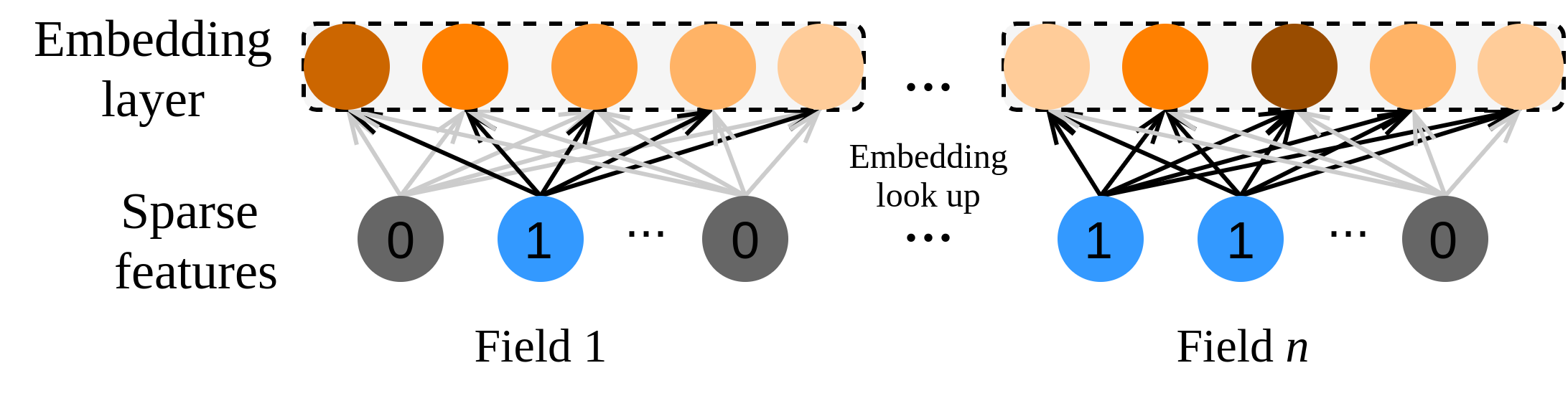}
    \vspace{-0.2cm}
    \caption{An example of the embedding layer with an embedding dimension of 5.}
    \label{fig:embedding layer}
    \vspace{-0.6cm}
\end{figure}
We apply the input and embedding layer as the first stage of our model, which converts a set of sparse representations of input features into dense vectors as shown in the left panel of \figurename\ \ref{fig:dcap}.

\subsection{Multi-head Self-attention}
\subsubsection{Motivations}
One efficient way to enable feature interactions to contribute differently to the prediction is the attention mechanism. Attention-based models can be traced back to the Neural Machine Translation \cite{Bahdanau2015,Luong2015}, where Recurrent Neural Networks (RNNs) applied the attention mechanism between encoder and decoder. However, the recurrent mechanism for temporal dependency modeling is not easy to parallelize. Transformer \cite{Vaswani2017} extends the attention mechanism to a multi-head self-attention with scaled dot-production. It has been widely used in diverse NLP domains and many other tasks such as recommendation system \cite{Feng2019}, information retrieval \cite{Tay2018}, and computer vision \cite{Ramachandran2019}. The main idea of self-attention is a soft addressing process within pair-wise token representations in a sequence. Only a subset of tokens are worth more \textit{attention} instead of assigning equal weights to all the features. For example, consider a sentiment classification problem with the texts “I feel so happy to buy a new laptop.” The words besides “happy” are not indicative of the positive emotion exhibited in this sentence. Those interactions involving irrelevant features can be considered as noises that have less contribution to the prediction. In real-world applications, different predictor variables usually have different predictive power, and not all features contain a useful signal for estimating the target. The interactions with less useful features should be assigned a lower weight as they contribute less to the prediction. Here we extend this multi-head attention mechanism to model the dependencies across different feature fields due to its capability of differentiating the importance of feature interactions.
\subsubsection{Formulations}
Specifically, for a set of input feature embeddings $\mathbf{X} = (\mathbf{x}_1, \mathbf{x}_2, \cdots, \mathbf{x}_n)^{T}\in \mathcal{R}^{n\times d}$ (with number of feature fields $n$ and dimensionality $d$), where $\mathbf{x}_i\in \mathcal{R}^{d}$. The self-attention function firstly projects them into queries $\mathbf{Q}\in \mathcal{R}^{n\times d_{q}}$, keys $\mathbf{K}\in \mathcal{R}^{n\times d_{k}}$ and values $\mathbf{V}\in \mathcal{R}^{n\times d_{v}}$, with different, learned linear projections to $d_{q}$, $d_{k}$ and $d_{v}$ dimensions, respectively. Then a particular scaled dot-product attention was computed to obtain the weights on the values as:
\begin{equation}
    \textsc{Attention}(\mathbf{Q}, \mathbf{K}, \mathbf{V})=\textsc{softmax}(\frac{\mathbf{Q}\mathbf{K}^{T}}{\sqrt{d_{k}}})\mathbf{V}
\end{equation}

For any single head, this attention mechanism operates on the input embedding and computes a new embedding output $\mathbf{Z}=(\mathbf{z}_1, \mathbf{z}_2, \cdots, \mathbf{z}_n)^{T}$ of the same number of fields where $\mathbf{z}_i\in \mathcal{R}^{d_v}$. 

Each output element, $\mathbf{z}_i$, is computed as weighted sum of a linearly transformed input elements:
\begin{equation}
\mathbf{z}_{i}=\sum_{j=1}^{n} \alpha_{i j}\left(\mathbf{x}_{j} W^{V}\right)
\end{equation}

Each weight coefficient, $\alpha_{ij}$, is computed using a softmax funtion:
\begin{equation}
\alpha_{i j}=\frac{\exp e_{i j}}{\sum_{k=1}^{n} \exp e_{i k}}
\end{equation}

And $e_{ij}$ is computed by the attention function that essentially captures the correlations between queries and keys using this dot-product so as to perform a soft-addressing process:
\begin{equation}e_{i j}=\frac{\left(\mathbf{x}_{i} W^{Q}\right)\left(\mathbf{x}_{j} W^{K}\right)^{T}}{\sqrt{d_{v}}}\end{equation}
where $W^Q\in \mathcal{R}^{d\times d_q}$, $W^K\in \mathcal{R}^{d\times d_k}$, $W^V\in \mathcal{R}^{d\times d_v}$ are parameter matrices.

Multi-head attention allows the model to jointly attend to information from different representation subspaces of different fields. The weighted sum of all concatenated heads is then computed as following:
\begin{equation}
    \textsc{MultiHead}(\mathbf{Q}, \mathbf{K}, \mathbf{V})=\textsc{Concat}(\textit{head}_{1}, \cdots, \textit{head}_{h})W^{O}
\end{equation}
in which, $h$ is the number of total heads.
Each head is defined as:
\begin{equation}
    \textit{head}_{i}=\textsc{Attention}(\mathbf{X}W_{i}^{Q}, \mathbf{X}W_{i}^{K}, \mathbf{X}W_{i}^{V})
\end{equation}
where the projections are parameter matrices $W_{i}^{Q}\in \mathcal{R}^{d\times d_{q}}$, $W_{i}^{K}\in \mathcal{R}^{d\times d_{k}}$, $W_{i}^{V}\in \mathcal{R}^{d\times d_{v}}$ and $W^{O}\in \mathcal{R}^{hd_{v}\times d}$. In practice, we usually set $d_{q}=d_{k}=d_{v}$, and $d=hd_v$.

\subsection{Deep Cross Attentional Product Network}
\label{dcap}
\begin{figure}
    \centering
    \includegraphics[width=\linewidth]{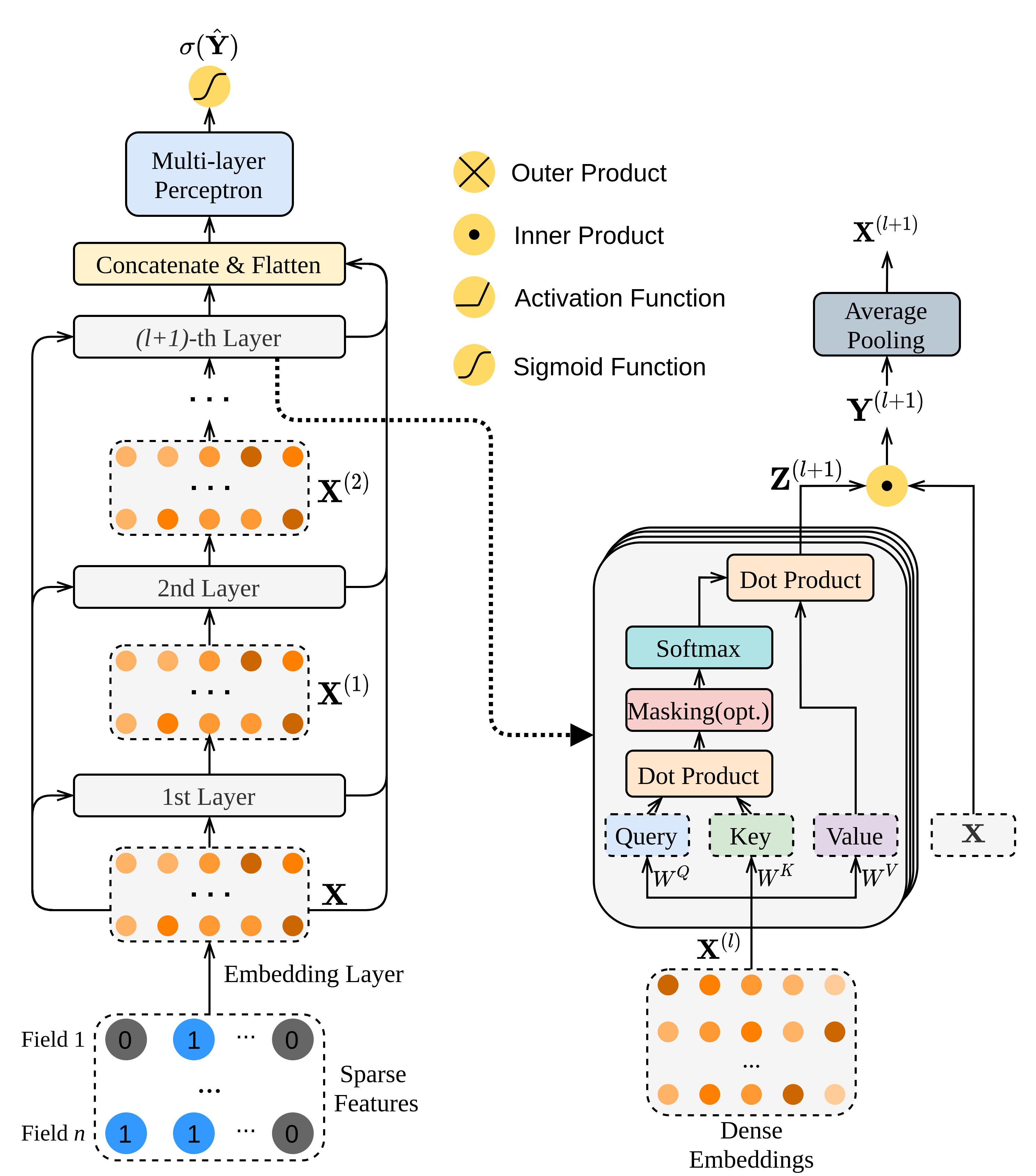}
    \vspace{-0.1cm}
    \caption{The overall architecture of our Deep Cross Attentional Network consists of multiple cross attentional product layers (left panel). Specifically, we illustrate the modeling process in detail for the $(l+1)$-th layer in the right panel.}
    \label{fig:dcap}
    \vspace{-0.4cm}
\end{figure}

In order to explore richer explicit high-order feature interactions by deepening the attention network, we design cross attention layers with each layer having the following functions:
\begin{align} 
 \mathbf{Z}^{(l+1)} &= \textsc{MultiHead}(\mathbf{Q}^{(l)}, \mathbf{K}^{(l)}, \mathbf{V}^{(l)}) \\
 \mathbf{P}^{(l+1)} &= \mathbf{Z}^{(l+1)}\odot|\otimes \mathbf{X} \\
 \mathbf{Y}^{(l+1)} &= \sum_{d} \mathbf{P}^{(l+1)} \\
 \mathbf{X}^{(l+1)} &= \textsc{AvgPooling}(\mathbf{P}^{(l+1)})
\end{align}
where $\mathbf{Q}^{(l)}$, $\mathbf{K}^{(l)}$ and $\mathbf{V}^{(l)}$ are projections of $\mathbf{X}^{(l)}\in \mathcal{R}^{n\times d}$ that are embedding vectors denoting the inputs for the $(l+1)$-th layer. $\mathbf{P}^{(l+1)}\in\mathcal{R}^{n(n-1)/2\times d}$ represents the production between the weighted outputs of multi-head attention $\mathbf{Z}^{(l)}\in \mathcal{R}^{n\times d}$ and the original input embeddings $\mathbf{X}\in \mathcal{R}^{n\times d}$ using either inner product $\odot$ or outer product $\otimes$. We further sum up the products along the embedding dimension to get $\mathbf{Y}^{(l+1)}\in\mathcal{R}^{n(n-1)/2}$ as the output (used for prediction) of the $(l+1)$-th layer. The average pooling layer then involves calculating the average for each patch of the feature interactions which means that each $k$ (kernal size) cross features is down sampled to the average value of them. In output layer, we concatenate the flattened input embeddings with outputs from each layer followed by a fully connected neural network. A visualization of the general architecture is shown in \figurename\ \ref{fig:dcap}.

\subsubsection{Product Operation}
We adopt the product operation to model the feature interactions as inspired by \cite{Qu2017}. For inner product, we define the operation of two vectors $\mathbf{a}\in\mathcal{R}^{d}$ and $\mathbf{b}\in\mathcal{R}^{d}$ as following:
\begin{equation}
\mathbf{a} \odot \mathbf{b} \triangleq (\mathbf{a}_{1} \mathbf{b}_{1}, \cdots, \mathbf{a}_{i} \mathbf{b}_{i}, \cdots, \mathbf{a}_{n} \mathbf{b}_{n} )
\end{equation}
For outer product, the definition of the operation is as following:
\begin{equation}
\mathbf{a} \otimes \mathbf{b} \triangleq (\sum_{j} \mathbf{a}_{1} \mathbf{b}_{j}, \cdots, \sum_{j} \mathbf{a}_{i} \mathbf{b}_{j}, \cdots, \sum_{j} \mathbf{a}_{n} \mathbf{b}_{j})
\end{equation}
Now, if we apply inner product operation between $\mathbf{Z}^{(l+1)}$ and $\mathbf{X}$, then
\begin{equation}
    \mathbf{Z}^{(l+1)}\odot\mathbf{X}=\{\mathbf{z}_{i}^{(l+1)}\odot\mathbf{x}_{j}\}_{(i, j)\in\mathcal{R}_{x}}
\label{eq16}
\end{equation}
where $\mathcal{R}_{x}=\{(i, j)\}_{1\leq i, j\leq n, j > i}$. Note that we only use the cross relationships between features from different fields once such that we expand $n$ features to $n(n-1)/2$ interacted cross features for each layer.

\begin{figure}
    \centering
    \vspace{-0.4cm}
    \includegraphics[width=\linewidth]{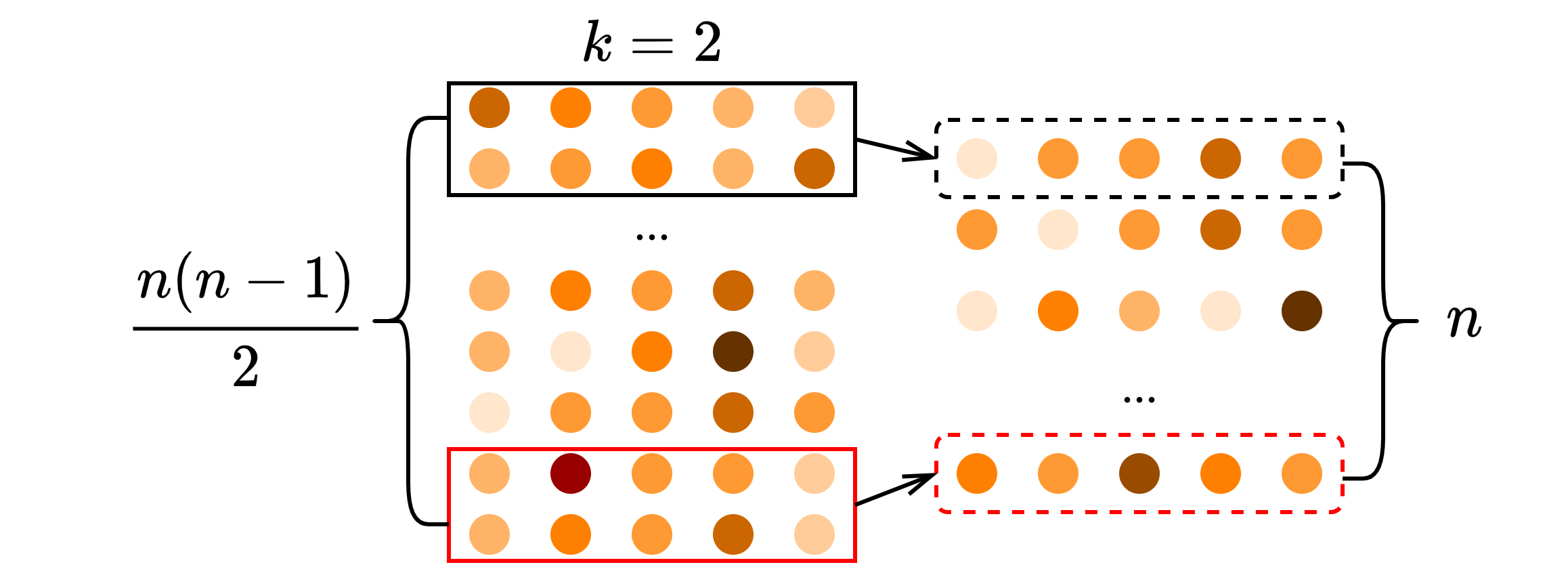}
    \caption{An example of the average pooling process with a kernel size as 2.}
    \label{fig:pooling}
    \vspace{-0.4cm}
\end{figure}

\subsubsection{Adaptive Average Pooling}
In fact, the production result $\mathbf{P}^{(l+1)}$ of the $(l+1)$-th layer has a total number of $\mathcal{O}(n^2)$ cross feature representations, and it will go quadratically as we further pass it to the multi-head attention in the next layer since the dot-product attention computation takes another $\mathcal{O}(n^2)$ complexity, $n$ is the number of input features. Average pooling is a common technique used in computer vision, especially in the pooling layer of convolutional neural network \cite{Lin2014}. In our model, we extend the average pooling approach to avoid the quadratic complexity caused by the depth of network without losing too much information. The general 1D average pooling process is as following:
\begin{equation}
\textit{Output}=\frac{1}{{k}} \sum_{{m}=0}^{{k}-1} \textit{Input}\left(\textsc{Stride} \times {l}+{m}, \cdots \right)
\end{equation}
where $\textit{output}\in \mathcal{R}^{L_{\textit{out}}\times d}$ and $\textit{input}\in \mathcal{R}^{L_{\textit{in}}\times d}$. 
Adaptive average pooling strategy can adaptively tune the kernel size $k$ and stride steps to our desired cross feature dimension since one can easily conclude that 
${L}_{\textit{out }}=\left\lfloor\frac{{L}_{\textit{in }}+2 \times \textsc{Padding }-\textsc{Kernel Size}}{\textsc{Stride }}+1\right\rfloor$. \figurename\ \ref{fig:pooling} is a visualization of the average pooling process. 

By applying the average pooling method, we compress down the cross features space from $\mathcal{O}(n^2)$ to $\mathcal{O}(n)$ while each new feature is a combination of a subset of original cross features. 

\subsection{Combination with Bit-wise Feature Interactions}
Upon the deep feature interactions, we concatenate all the outputs from each layer with the flattened input embedding followed by a multi-layer perceptron (also known as a fully connected neural network with multiple layers) to further explore the implicit bit-wise interactions. Suppose a cross attentional product network with $L$ layers, the final output would be computed as:
\begin{equation}
    \widehat{\mathbf{Y}} = \textsc{Mlp}(\textsc{Concat}[\mathbf{X}_{\text{flatten}}, \mathbf{Y}^{(1)}, \cdots, \mathbf{Y}^{(L)}])
\end{equation}
where the architecture of $\textsc{Mlp}$ can be referred to Section \ref{bit-wise interactions}. At the end, the output of the hidden layers is transformed to the final user clicking probabilities through sigmoid function $\sigma(\widehat{\mathbf{Y}}) = 1/(1+e^{-\widehat{\mathbf{Y}}})$. 

For binary classifications, the loss function is the log loss:
\begin{equation}
\mathcal{L}=-\frac{1}{N} \sum_{i=1}^{N} \mathbf{y}_{i} \log \hat{\mathbf{y}}_{i}+\left(1-\mathbf{y}_{i}\right) \log \left(1-\hat{\mathbf{y}}_{i}\right)
\end{equation}
where $N$ is the total number of training instances. The optimization process is to minimize the following objective function:
\begin{equation}
\mathcal{J}=\mathcal{L}+\lambda_{*}\|\Theta\|
\end{equation}
where $\lambda_{*}$ denotes the regularization term and $\|\Theta\|$ denotes the set of parameters, including these in both cross feature modeling layers and DNN part. 

\subsection{Model Analysis}
In this section, we analyze the proposed DCAP to study the model complexity and potential effectiveness. 
\subsubsection{Model Effectiveness}
First of all, we argue that this deep and cross structure is able to model high-order explicit feature interactions as the degree of cross features grows with layer depth.
\begin{theorem}
Consider an $l$-layer deep cross attentional product network with the $(m+1)$-th layer defined as Section \ref{dcap}. Let the input embedding be $\mathbf{X}$, the output of attentional feature embeddings in $(m+1)$-th layer be $\mathbf{Z}^{(m+1)}$. Then, the highest degree of cross features captured in terms of original input embedding $\mathbf{X}$ is $l+1$.
\end{theorem}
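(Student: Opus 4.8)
The plan is to argue by induction on the layer index, tracking at each layer the maximal polynomial degree (in the entries of the original embedding $\mathbf{X}$) of the cross features that the layer produces. The essential preliminary step is to fix a degree-counting convention for the three building blocks of a layer. First I would observe that the attention block $\textsc{MultiHead}(\mathbf{Q}^{(l)},\mathbf{K}^{(l)},\mathbf{V}^{(l)})$ acts on its value vectors through the linear projection $\mathbf{X}^{(l)}W^{V}_{i}$ followed by the weighted average $\sum_{j}\alpha_{ij}(\cdot)$; since the coefficients $\alpha_{ij}$ are softmax-normalized scalars (data-dependent, but not monomials in the embedding entries), they are treated as degree-$0$ weights exactly as the learnable weights are in the cross network of \cite{Wang2017}. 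Hence $\mathbf{Z}^{(l+1)}$ carries the same maximal degree as its input $\mathbf{X}^{(l)}$. By contrast, the product step $\mathbf{P}^{(l+1)}=\mathbf{Z}^{(l+1)}\odot|\otimes\mathbf{X}$ multiplies each entry of $\mathbf{Z}^{(l+1)}$ against an entry of the \emph{original} embedding $\mathbf{X}$ (degree $1$), so, since inner and outer products add the degrees of their factors, it raises the maximal degree by exactly one. Finally $\sum_{d}$ and $\textsc{AvgPooling}$ are linear combinations of their inputs and therefore preserve the maximal degree.

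With these facts in hand I would state the induction hypothesis: for every $0\le m\le l-1$ the attentional output $\mathbf{Z}^{(m+1)}$ has maximal cross-feature degree $m+1$, while $\mathbf{P}^{(m+1)}$ (and hence $\mathbf{Y}^{(m+1)}$ and $\mathbf{X}^{(m+1)}$) has maximal degree $m+2$, all measured in the entries of $\mathbf{X}$. The base case $m=0$ is immediate: $\mathbf{X}^{(0)}=\mathbf{X}$ has degree $1$, the value-linear map gives $\mathbf{Z}^{(1)}$ degree $1$, and the single product with $\mathbf{X}$ lifts $\mathbf{P}^{(1)}$ to degree $2$. For the inductive step I would feed $\mathbf{X}^{(m)}$, of degree $m+1$ by hypothesis, into the attention block to obtain $\mathbf{Z}^{(m+1)}$ of degree $m+1$, cross it with $\mathbf{X}$ to reach degree $m+2$ in $\mathbf{P}^{(m+1)}$, and note that pooling leaves $\mathbf{X}^{(m+1)}$ at degree $m+2=(m+1)+1$, closing the induction. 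Evaluating at the final layer $m+1=l$ then gives $\mathbf{P}^{(l)}$, and thus the layer output $\mathbf{Y}^{(l)}$ that enters the concatenation, with maximal degree $l+1$, which is the claimed bound.

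I expect the main obstacle to be justifying the degree-counting convention for the attention block, not the mechanical induction. One must argue carefully that the softmax weights $\alpha_{ij}$ and the query--key scores $e_{ij}$ do not contribute to the cross-feature degree being tracked: although $e_{ij}$ is itself quadratic in $\mathbf{X}^{(l)}$, the normalization renders $\alpha_{ij}$ a bounded adaptive weight, and it is therefore excluded from the monomial degree exactly as in the explicit high-order analyses of \cite{Wang2017} and \cite{Lian2018}. A secondary point worth making explicit is that the degree increment is driven by the product against the \emph{original} embedding $\mathbf{X}$ rather than against $\mathbf{X}^{(l)}$; this is precisely what forces a clean increase of one per layer and prevents the degree from doubling. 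Once these two modeling conventions are stated precisely, the remaining verification that products add degrees while summation and pooling preserve them is routine.
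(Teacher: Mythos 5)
Your proposal is correct and follows essentially the same route as the paper's proof: induction on the layer index, treating the softmax weights $\alpha_{ij}$ and the value projection as degree-$0$ coefficients, noting that average pooling is a degree-preserving linear combination, and observing that the product against the \emph{original} embedding $\mathbf{X}$ raises the degree by exactly one per layer. Your version is in fact slightly tidier on the bookkeeping --- you state explicitly the convention that the data-dependent $\alpha_{ij}$ (despite $e_{ij}$ being quadratic in $\mathbf{X}^{(l)}$) are excluded from the monomial degree, and you keep the indexing $\deg \mathbf{X}^{(m)} = m+1$ consistent where the paper's wording about ``the $m$-th order cross features'' is off by one --- but the substance of the argument is identical.
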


\begin{proof}
As an example, for the first layer, as Eq. \ref{eq16} demonstrates,
\begin{equation}
\begin{split}
    \mathbf{z}_{i}^{(1)}\odot\mathbf{x}_{k} &= \sum_{j=1}^{n} \alpha_{i j}\left(\mathbf{x}^{(l)}_{j} W^{V}\right)\odot\mathbf{x}_{k} \\
    &= \sum_{j=1}^{n} \alpha_{i j}\left(\mathbf{x}^{(l)}_{j} W^{V}\odot\mathbf{x}_{k}\right) \\
    &= \mathbf{p}^{T}\sum_{j=1}^{n} \alpha_{i j}\left(\mathbf{x}^{(l)}_{j} \odot\mathbf{x}_{k}\right)
\end{split}
\end{equation}
which models cross features ranging from $\mathbf{x}_{1}\mathbf{x}_{k}$ to $\mathbf{x}_{n}\mathbf{x}_{k}$ and the degree of cross feature in terms of input embedding $\mathbf{x}$ is 2. Note that the average pooling process does not change the order of feature interactions since it can be viewed as multiplying an additional averaging coefficient $\beta_{ij}$ to the cross feature representations. Suppose this statement holds for $l=m$. For $(m+1)$-th layer, we have:
\begin{equation}
\begin{split}
    \mathbf{z}_{i}^{(m+1)}\odot\mathbf{x}_{k} 
    &= \mathbf{p}^{T}\sum_{j=1}^{n} \alpha_{i j}\left(\mathbf{x}^{(m)}_{j} \odot\mathbf{x}_{k}\right)
\end{split}
\end{equation}
where $\mathbf{x}^{(m)}_{j} \odot\mathbf{x}_{k}$ can model cross features with degree of order $(m+1)$ since $\mathbf{x}^{(m)}_{j}$ represents the $m$-th order cross features. By induction hypothesis, one can easily prove that the highest degree in terms of original input embedding $\mathbf{X}$ for an $l$-layer cross attentional product network is $l+1$.
\end{proof}

\subsubsection{Model Complexity}
For computing multi-head attention, the total number of Mult-Adds is  $\mathcal{O}(2n^2d+4nd^2)$. For production of features, it takes another $\mathcal{O}(nd(n-1)/2)=\mathcal{O}(n^2d)$ Mult-Adds. So the total computation complexity for each layer in terms of Mult-Adds is $\mathcal{O}(3n^2d+4nd^2)$. A cross attentional product network with $l$ layers would take a total number of Mult-Adds as $\mathcal{O}(3n^2dl+4nd^2l)$. For the feed-forward network, the input size is $d'=nd+ln(n-1)/2$ such that the total number of Mult-Adds is $\mathcal{O}(d'h_1+h_1h_2+h_2)$, where $h_1, h_2$ are the hidden size of each hidden layer, respectively. For space complexity, the attention part takes $\mathcal{O}(4d^2)$ including three projection matrices and one final weighted sum layer. For DNN part, it contains weight matrices that are $\mathcal{O}(d'h_1+h_1h_2+h_2)$. For a model with $l$ layers, the total space complexity would be $\mathcal{O}(4d^2l+d'h_1+h_1h_2+h_2)$, which is quadratic with respect to both number of fields $n$ and embedding dimension $d$. 

% Recent approaches \cite{Wang2020,Tay2020} to compute multi-head attention have reduced the total complexity to a linear $\mathcal{O}(n)$ without hurting model performance. These orthogonal techniques can further speed up our model's inference time, and we will discuss them in future work. 

\section{Experiments}
\label{experiments}
In this section, we conduct extensive experiments to answer the following questions:
\begin{itemize}
    \item (\textbf{RQ1}) How does our proposed Deep Cross Attentional Product Network (DCAP) perform in high-order feature interactions learning compared to the other state-of-the-art methods?
    \item (\textbf{RQ2}) How do the different hyper-parameter settings affect model performance?
    % \item (\textbf{RQ3}) How is our proposed model's computation efficiency compared to the other popular models used in industry, and what causes the promotion in model performance?
    \item (\textbf{RQ3}) Is the multi-head attention mechanism vital for our model, and how can it essentially be useful in prediction?
\end{itemize}

\subsection{Experimental Setup}
\subsubsection{Datasets}
As the click-through rate (CTR) prediction in online advertising is one of the most critical user response domains, we take it as our experimental examples to explore our model's potential effectiveness. We evaluate the proposed model on the following three public datasets: 1) \textbf{Criteo}\footnote{http://labs.criteo.com/2014/02/kaggle-display-advertising-challenge-dataset/} Display Ads dataset is a famous benchmark for ads click-through rate prediction. It has 45 million users' clicking records on displayed ads. It contains 13 numerical feature fields and 26 categorical feature fields where each category has a high cardinality. 2) \textbf{Avazu}\footnote{https://www.kaggle.com/c/avazu-ctr-prediction/data} This dataset contains users' mobile behaviors, including whether a user clicks a displayed mobile ad or not. It has 23 feature fields spanning from user/device features to ad attributes. 3) \textbf{MovieLens-1M}\footnote{https://grouplens.org/datasets/movielens/} This dataset contains users’ ratings on movies. We treat samples with a rating of less than three as negative samples for binarization because a low score indicates that the user does not like the movie and will not click it for watching. We treat samples with a rating greater than three as positive samples. The statistics of the three datasets are summarized in \tablename\ \ref{tab:datasets}.

\begin{table}
\vspace{-0.2cm}
\caption{Statistics of evaluation datasets.}
\vspace{-0.2cm}
\label{tab:datasets}
\resizebox{\linewidth}{!}{%
\begin{tabular}{@{}cccc@{}}
\toprule
\multicolumn{1}{c}{Dataset} & \#Instances & \#Fields & \#Feature Dimension \\ \midrule
Criteo                      & 45,840,617  & 39       & 1,086,810           \\
Avazu                       & 14,426,917  & 23       & 767,250             \\
MovieLens-1M                & 1,000,209   & 5        & 10,072              \\ \bottomrule
\end{tabular}%
}
\vspace{-0.4cm}
\end{table}

\subsubsection{Data Preparation}
First, we set a frequency \textit{threshold} for removing all infrequent features and treat them as a single feature "$<$unknown$>$", where \textit{threshold} is set to $\{10, 4\}$ for Criteo, Avazu, respectively. Second, for numerical features with a high variance that might influence the learning process, we normalize them by transforming a value $z$ to $\log^{2}{z}$ if $z>2$, which is proposed by the winner of Criteo Competition\footnote{https://www.csie.ntu.edu.tw/~r01922136/kaggle-2014-criteo.pdf}. Third, we randomly select 80\% of all samples for training and consecutively split the rest into validation and test sets as 1:1.

\subsection{Evaluation Metrics}
We apply two famous metrics to evaluate the performance of our proposed approach and all other competitive models: \textbf{AUC} Area under ROC curve and \textbf{Logloss} (cross entropy). Note that \textbf{a slight increase in AUC or decrease in Logloss at 0.001-level} is known to be a significant improvement for the tasks such as CTR prediction \cite{Cheng2016,Guo2017,Wang2017}.

\subsection{Baselines}
We compare DCAP with existing methods of three classes (also see \tablename\ \ref{tab:performance}): (a) The first-order approaches that model a linear combination of raw features; (b) FM-based methods that capture second-order feature interactions; (c) Advanced approaches that model high-order cross features. 
\begin{itemize}
    \item \textbf{LR} logistic regression models the linear relationship between features and targets.
    \item \textbf{FM} \cite{Rendle2010} factorization machine applies matrix factorization techniques to capture the second order feature interactions for prediction.
    \item \textbf{AFM} \cite{He2017} attentional factorization machine assigns attention weights to feature interactions based on FM.
    \item \textbf{HOFM} \cite{Blondel2016} high-order factorization machine can model high-order feature interactions.
    \item \textbf{NFM} \cite{He2017a} neural factorization machine sums up pairwise Hadamard product of features followed by a fully connected neural network. 
    \item \textbf{PNN} \cite{Qu2017} product neural network models high-order feature interactions by computing pairwise inner (iPNN) or outer products (oPNN) of input features followed by a fully connected neural network. 
    \item \textbf{Wide \& Deep} \cite{Cheng2016} wide \& deep model integrates LR and DNN. Also, we omit the hand-crafted cross features for a fair comparison. 
    \item \textbf{DCN} \cite{Wang2017} deep \& cross network models the explicit feature, by computing the outer product between original input embedding and corresponding output across layers. 
    \item \textbf{DeepFM} \cite{Guo2017} deep factorization machine is an ensemble between DNN and FM. 
    \item \textbf{xDeepFM} \cite{Lian2018} xDeepFM models both explicit and implicit high-order feature interactions by computing outer products of feature vectors at different orders. It also combines with DNN. 
    \item \textbf{AutoInt} \cite{Song2019} auto feature interaction model applies multi-head self-attention mechanism on modeling cross feature interactions. 
    \item \textbf{AFN} \cite{Cheng2019} adaptive factorization network learns arbitrary-order cross features adaptively by building a logarithmic transformation layer that converts each feature's power in a feature combination into the coefficient to be learned. 
\end{itemize}

\subsection{Training Settings}
We implement our approaches using PyTorch\footnote{https://pytorch.org/}. We apply Adam \cite{Kingma2015} with a learning rate of 0.001 and a weight decay of $1e^{-6}$ to prevent overfitting, and a mini-batch size of 4096 across all tasks. For the fair competition, we set the default architecture of the dense neural network with two hidden layers and 100 neurons per layer for all models that involve DNN. To avoid overfitting, we perform the early-stopping strategy based on AUC on the validation set. A dropout method is also applied across all models with a rate of 0.5 for the MovieLens-1M dataset and 0.2 for the other two datasets to prevent overfitting. The dimension of feature embeddings is set to 16 for all the models across all tasks consistently. More specifically, the number of layers in DCN set to 2. The maximum order in HOFM is set to 3. The attention embedding size of model AFM and AutoInt is 64. Additionally, the number of heads in AutoInt is set to 4. The default number of logarithmic neurons in AFN is set to 1500, 1200, 800 for Criteo, Avazu, and Movielens datasets. For our model DCAP, we set the maximum depth of network to 2 as a bounded order of feature interactions. 

All the hyper-parameters are tuned on the validation set. We run the experiments for each empirical result as 20 independent trials on four NVIDIA Tesla P100 GPUs in parallel and report the average value with the standard deviation. 

\begin{table*}[!htb]
\centering
\vspace{-0.4cm}
\caption{The overall performance of all models on Criteo, Avazu, and MovieLens-1M datasets. Best performance in boldface. We also mark the second-best model with an underline. We further analyze these results in Section \ref{performance}.}
\label{tab:performance}
\vspace{-0.2cm}
\resizebox{\textwidth}{!}{%
\begin{tabular}{@{}llllllll@{}}
\toprule
\multirow{2}{*}{Model Class} &
  \multirow{2}{*}{Model} &
  \multicolumn{2}{c}{Criteo} &
  \multicolumn{2}{c}{Avazu} &
  \multicolumn{2}{c}{MovieLens-1M} \\
\multicolumn{1}{c}{} &
  \multicolumn{1}{c}{} &
  \multicolumn{1}{c}{AUC} &
  \multicolumn{1}{c}{Logloss} &
  \multicolumn{1}{c}{AUC} &
  \multicolumn{1}{c}{Logloss} &
  \multicolumn{1}{c}{AUC} &
  \multicolumn{1}{c}{Logloss} \\ \midrule
First-Order &
  LR &
  0.7943+/-0.0000 &
  0.4560+/-0.0000 &
  0.7608+/-0.0000 &
  0.3916+/-0.0000 &
  0.7918+/-0.0000 &
  0.5406+/-0.0000 \\ \midrule
\multirow{2}{*}{Second-Order} &
  FM &
  0.8040+/-0.0001 &
  0.4478+/-0.0001 &
  0.7816+/-0.0004 &
  0.3826+/-0.0005 &
  0.8004+/-0.0005 &
  0.5391+/-0.0017 \\
 &
  AFM &
  0.8073+/-0.0001 &
  0.4443+/-0.0001 &
  0.7756+/-0.0008 &
  0.3859+/-0.0017 &
  0.7983+/-0.0018 &
  0.5368+/-0.0021 \\ \midrule
\multirow{11}{*}{High-Order} &
  HOFM &
  0.8059+/-0.0001 &
  0.4459+/-0.0001 &
  0.7824+/-0.0002 &
  0.3820+/-0.0006 &
  0.7953+/-0.0008 &
  0.5381+/-0.0011 \\
 &
  NFM &
  0.8061+/-0.0006 &
  0.4455+/-0.0006 &
  0.7809+/-0.0010 &
  0.3892+/-0.0016 &
  0.7896+/-0.0175 &
  0.5512+/-0.0149 \\
 &
  iPNN &
  0.8118+/-0.0001 &
  0.4400+/-0.0001 &
  0.7847+/-0.0007 &
  0.3772+/-0.0007 &
  0.8039+/-0.0009 &
  0.5292+/-0.0012 \\
 &
  oPNN &
  {\underline{0.8132+/-0.0001} } &
  {\underline{0.4387+/-0.0001} } &
  {\underline{0.7851+/-0.0003} } &
  {\underline{0.3766+/-0.0001} } &
  0.8046+/-0.0008 &
  0.5289+/-0.0019 \\
 &
  Wide\&Deep &
  0.8107+/-0.0001 &
  0.4411+/-0.0001 &
  0.7732+/-0.0006 &
  0.3855+/-0.0003 &
  0.7999+/-0.0012 &
  0.5374+/-0.0009 \\
 &
  DCN &
  0.8112+/-0.0001 &
  0.4408+/-0.0001 &
  0.7833+/-0.0004 &
  0.3784+/-0.0002 &
  {\underline{0.8051+/-0.0010} } &
  {\underline{0.5271+/-0.0015} } \\
 &
  DeepFM &
  0.8086+/-0.0001 &
  0.4432+/-0.0000 &
  0.7757+/-0.0005 &
  0.3853+/-0.0005 &
  0.8008+/-0.0011 &
  0.5366+/-0.0017 \\
 &
  xDeepFM &
  0.8108+/-0.0002 &
  0.4411+/-0.0002 &
  0.7767+/-0.0007 &
  0.3845+/-0.0004 &
  0.8045+/-0.0009 &
  0.5297+/-0.0013 \\
 &
  AutoInt &
  0.8108+/-0.0001 &
  0.4411+/-0.0001 &
  0.7760+/-0.0003 &
  0.3846+/-0.0004 &
  0.8047+/-0.0006 &
  0.5289+/-0.0007 \\
 &
  AFN &
  0.8122+/-0.0000 &
  0.4397+/-0.0001 &
  0.7743+/-0.0012 &
  0.3876+/-0.0010 &
  0.7947+/-0.0021 &
  0.5424+/-0.0025 \\
 &
  \textbf{DCAP(ours)} &
  \textbf{0.8142+/-0.0001} &
  \textbf{0.4376+/-0.0001} &
  \textbf{0.7861+/-0.0003} &
  \textbf{0.3754+/-0.0002} &
  \textbf{0.8066+/-0.0012} &
  \textbf{0.5260+/-0.0013} \\ \bottomrule
\end{tabular}%
}
\vspace{-0.4cm}
\end{table*}

% \vspace{-0.2cm}
\subsection{Comparative Performance (\textbf{RQ1})}
\label{performance}
% This section summarizes the overall performance of various models, including individual models and ensemble models. The results are shown in \tablename\ \ref{tab:performance}. 

First, we want to know how our proposed model performs compared with various models involving first-order, second-order, and high-order feature interactions. Note that FM only models second-order cross features explicitly while DNNs model high-order feature interactions implicitly. Models integrated with DNNs such as NFM, PNN, DeepFM, Wide\&Deep, etc., are also capable of modeling high-order interactions. Additionally, models like DCN, xDeepFM, AutoInt, and AFN are all ensemble models that combine the effectiveness of jointly explicit and implicit learning, which means they can model high-order feature interactions both explicitly and implicitly. Our proposed DCAP is also such an ensemble model that cross attentional product explicitly models weighted high-order feature interactions followed by an implicit feed-forward neural network. 

The results shown in \tablename\ \ref{tab:performance} demonstrate that our method outperforms the other models across different datasets consistently. For the Criteo dataset, DCAP outperforms the second-best model oPNN by a significant 0.001 increase in AUC and 0.001 decreases in Logloss. It also has an 0.001 prediction performance increase in both AUC and Logloss on the Avazu dataset. For MovieLens-1M, the gap between DCAP and the second-best model is further widened. Compared to FM based models, DCAP has an almost 0.006 higher AUC score than DeepFM, which is generally considered a significant user response prediction benchmark. It also has a 0.0034 higher AUC score than xDeepFM, which also models high-order feature interactions explicitly yet without differentiating each cross feature's importance. Another interesting finding is that there is no theoretic guarantee of the superiority of high-order ensemble methods over the simple individual models since AFN performs worse than FM and AFM on both Avazu and MovieLens-1M datasets. Compared to the Criteo dataset, these two datasets contain fewer training instances and feature fields requiring minimal high-order feature interactions to make the predictions. As the AFN model learns arbitrary-order cross features, it can be hindered by overfitting the data due to the over-complicated feature interaction space. In this case, a simple model such as FM might achieve relatively better performance. Note that for the Avazu dataset, models such as DeepFM, xDeepFM, AutoInt also have worse performance than individual models like FM. These models share a typical architecture that combines the explicit cross features and implicit embeddings in an \textit{add} fashion. However, for PNN and DCN, they stack explicit feature interaction outputs and input embeddings together, followed by the DNNs. As such, this concatenated way of combining outputs may have a superiority than sum up. 

Generally, models integrated with DNNs have a better performance over individuals, which further illustrates the effectiveness of implicit feature interactions. Among all the integrated models, the results of PNN and DCN are more impressive due to the consistent outstanding prediction performance on three datasets. We analyze the potential effectiveness of these models and compare them with our method in the following. 

% \vspace{-0.4cm}
\subsubsection{Comparison Between DCAP and DCN}
Both DCAP and DCN adopt the \textit{cross network} architecture to explicitly model high-order cross features. However, the differences are: (a) DCN can learn feature interactions very efficiently (with much less computation), but the output of each cross network layer is limited to a scalar multiple of original input embedding \cite{Lian2018}, which may lead to a suboptimal model performance; (b) DCAP has a higher requirement for computational resources while can learn more elaborate cross feature interactions with discriminated attention or weights; (c) all interactions in DCN come at a bit-wise fashion, which brings more implicitness; however, DCAP models high-feature interactions at a vector-wise level which can be followed by an implicit bit-wise interaction layer. Though they are both jointly explicit and implicit learners, DCAP owns better interpretability. 

\subsubsection{Comparison Between DCAP and PNN}
The backbone of modeling cross features is inspired by PNN, which first applies inner product or outer product operations on modeling feature interactions. The downsides of Product Neural Network are: (a) PNN only models second-order explicit cross features while DNN implicitly captures the high-order cross features; DCAP can model high-order feature interactions both explicitly and implicitly; (b) similar to many other models, PNN is not capable of differentiating the importance of feature interactions. By deepening the cross product network combining with attention mechanism, we generalize the product network's capability to model high-order feature interactions explicitly. 

\begin{table}
% \vspace{-0.2cm}
\caption{AUC scores of model with different number of layers and heads on MovieLens-1M dataset.}
\label{tab:layer_vs_head_auc}
\vspace{-0.2cm}
\resizebox{\linewidth}{!}{%
\begin{tabular}{@{}c|ccccc@{}}
\toprule
\multicolumn{1}{l|}{\backslashbox{\#Layers}{\#Heads}}  & 1      & 2      & 4      & 8      & 16     \\ \midrule
1                     & 0.8066 & 0.8073 & 0.8065 & 0.8072 & 0.8068 \\
2                     & 0.8074 & 0.8074 & 0.8073 & 0.8075 & 0.8061 \\
3                     & 0.8079 & 0.8068 & 0.8053 & 0.8072 & 0.8066 \\
4                     & 0.8070 & 0.8078 & 0.8073 & 0.8055 & \textbf{0.8081} \\
5                     & 0.8075 & 0.8071 & 0.8076 & 0.8078 & 0.8068 \\ \bottomrule
\end{tabular}%
}
\vspace{-0.5cm}
\end{table}

\subsection{Hyper-parameter Investigation (\textbf{RQ2})}
One intuitive question is that how the multi-head attention and depth of the network affect the model performance. In this section, we study hyper-parameters' impact, including (a) the number of heads in multi-head attention; and (b) the number of cross layers in the network. We conduct experiments via holding the identical settings for the DNN part while varying the cross attentional product network part settings. We show how the validation ACU and Logloss change by adding more layers and heads when setting the embedding size to 64. \tablename\ \ref{tab:layer_vs_head_auc} and \tablename\ \ref{tab:layer_vs_head_logloss} show the changes in model performance in terms of AUC and Logloss on the MovieLens dataset, respectively. 

\begin{table}
\caption{Logloss of model with different number of layers and heads on MovieLens-1M dataset.}
\label{tab:layer_vs_head_logloss}
\vspace{-0.2cm}
\resizebox{\linewidth}{!}{%
\begin{tabular}{@{}c|rrrrr@{}}
\toprule
\multicolumn{1}{l|}{\backslashbox{$\#$Layers}{$\#$Heads}} & \multicolumn{1}{c}{1} & \multicolumn{1}{c}{2} & \multicolumn{1}{c}{4} & \multicolumn{1}{c}{8} & \multicolumn{1}{c}{16} \\ \midrule
1 & 0.5297 & 0.5286 & 0.5293 & 0.5296 & 0.5274 \\
2 & 0.5282 & 0.5278 & 0.5282 & 0.5300 & 0.5303 \\
3 & 0.5265 & 0.5296 & 0.5290 & 0.5270 & 0.5302 \\
4 & 0.5278 & 0.5294 & 0.5276 & 0.5294 & 0.5266 \\
5 & 0.5295 & 0.5278 & 0.5275 & 0.5266 & \textbf{0.5265} \\ \bottomrule
\end{tabular}%
}
\vspace{-0.5cm}
\end{table}

\subsubsection{Number of Heads}
The number of heads is a critical hyper-parameter in multi-head attention. With each attention \textit{head} potentially focusing on different parts of the input and revealing complementary information, this attention mechanism can express sophisticated functions beyond the simple weighted average. However, recent research \cite{Michel2019} has discovered that not all heads are practically necessary since only a small percentage of heads will significantly impact the performance. Besides, many heads will have a higher chance of modeling a more sophisticated combination of input embedding and overfitting the data. We then hypothesize that the number of heads is not the more, the better. As such, it is worth investigating how the number of heads affects model performance in practice. Besides the experiment results listed in \tablename\ \ref{tab:layer_vs_head_auc} and \tablename\ \ref{tab:layer_vs_head_logloss}, we further visualize the model performance against the number of heads in \figurename\ \ref{fig:avazu_heads} and \figurename\ \ref{fig:movie_heads} fixing other chosen randomly hyper-parameters. The performance generally ascends first and then descends as the number of heads increases. Specifically, when we have sixteen heads for the model, the performance drops down heavily, further demonstrating our conjecture. The model with eight heads has an astonishing AUC score, while the Logloss is not good on MovieLens dataset. For Avazu dataset, the model can achieve the best performance with a total of four heads. In all, we conclude that the model can achieve relatively good performance with at most four heads on both Avazu and MovieLens dataset.

\begin{figure}[htb]
     \centering
     \begin{subfigure}[b]{0.45\linewidth}
         \centering
         \includegraphics[width=\textwidth]{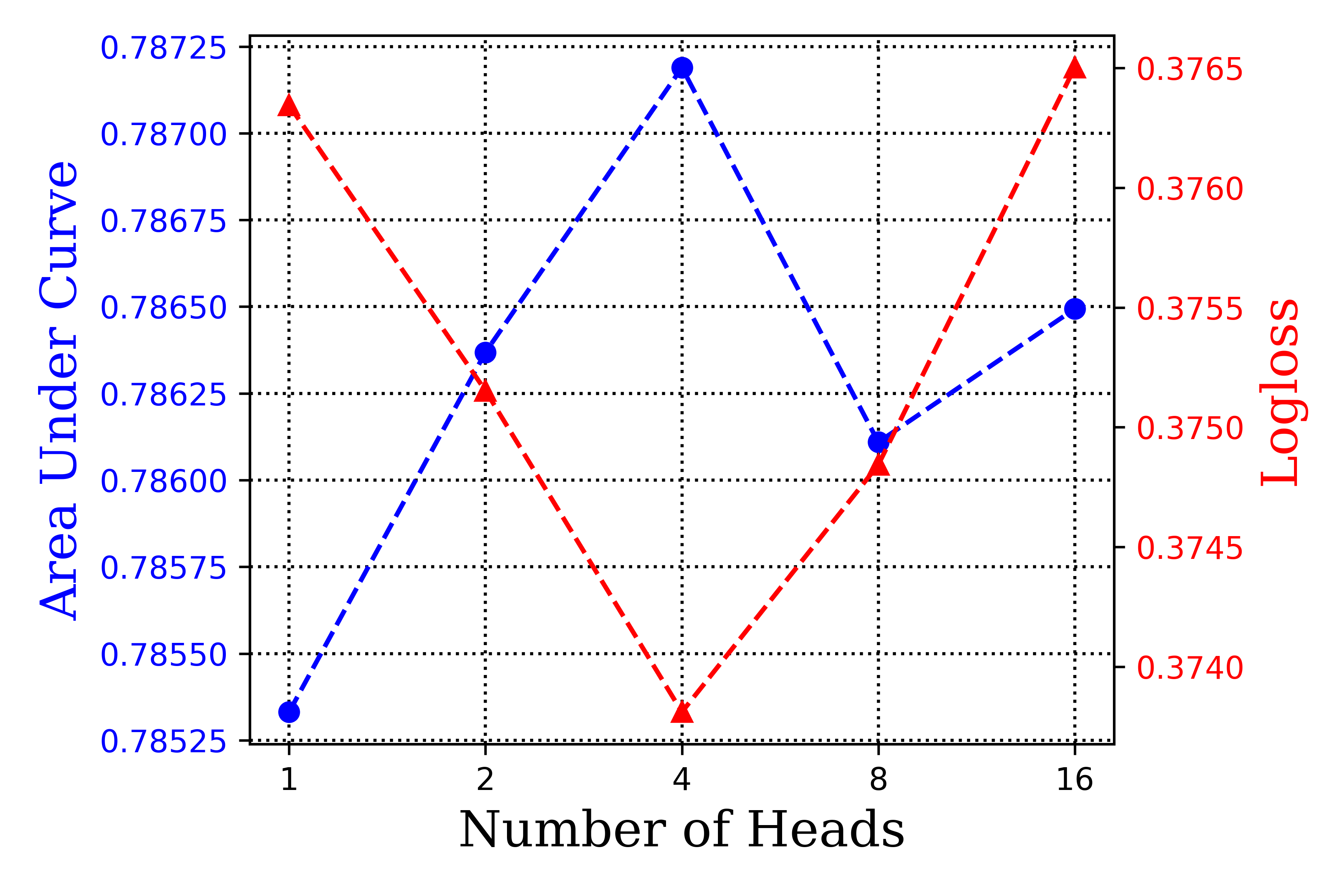}
         \caption{Model performance against the number of heads.}
         \label{fig:avazu_heads}
     \end{subfigure}
     \hfill
     \begin{subfigure}[b]{0.45\linewidth}
         \centering
         \includegraphics[width=\textwidth]{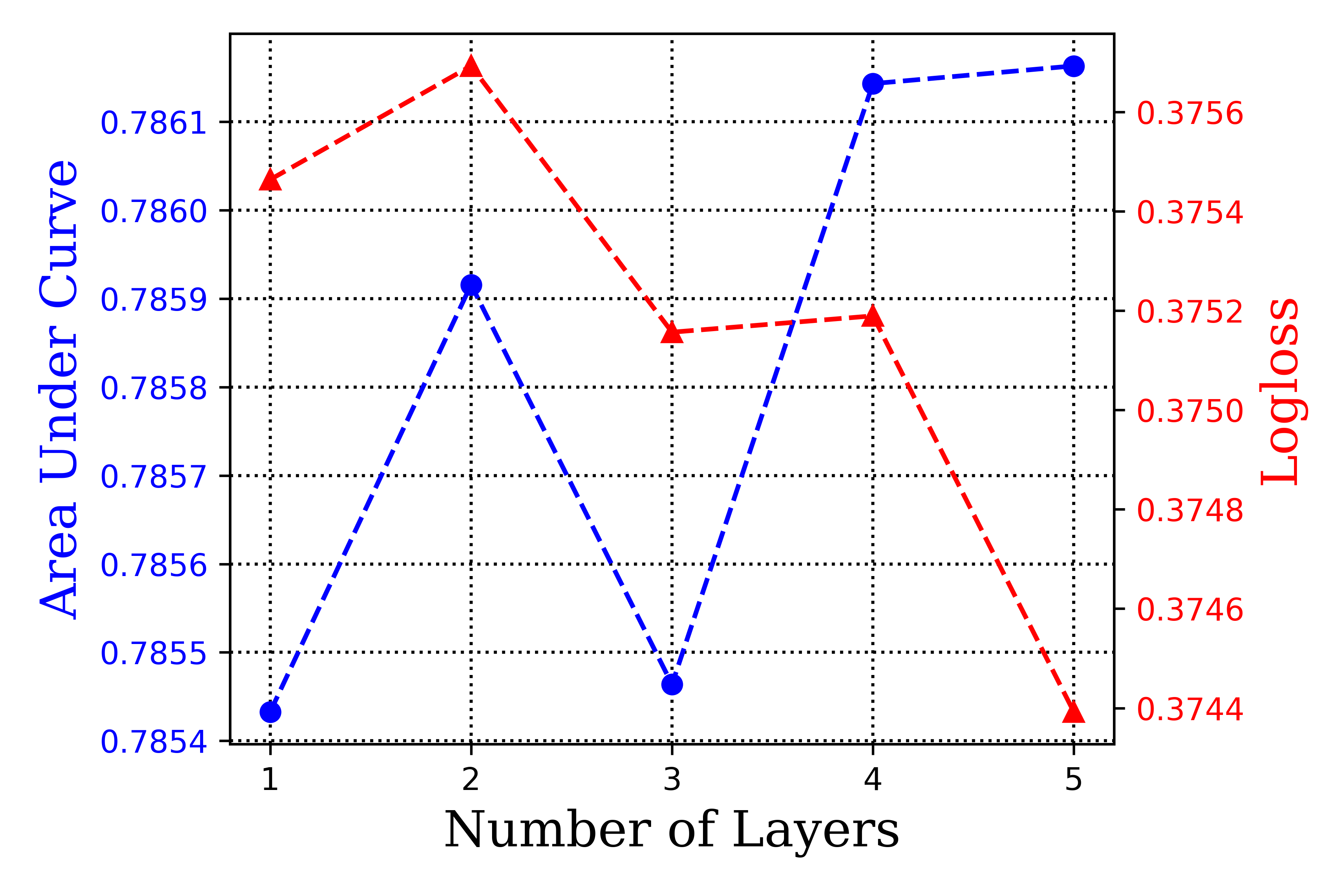}
         \caption{Model performance against the depth of network.}
         \label{fig:avazu_layers}
     \end{subfigure}
        \vspace{-0.2cm}
        \caption{Results on the Avazu dataset.}
        \label{fig:avazu}
        \vspace{-0.2cm}
\end{figure}

\begin{figure}[htb]
     \centering
     \vspace{-0.2cm}
     \begin{subfigure}[b]{0.45\linewidth}
         \centering
         \includegraphics[width=\textwidth]{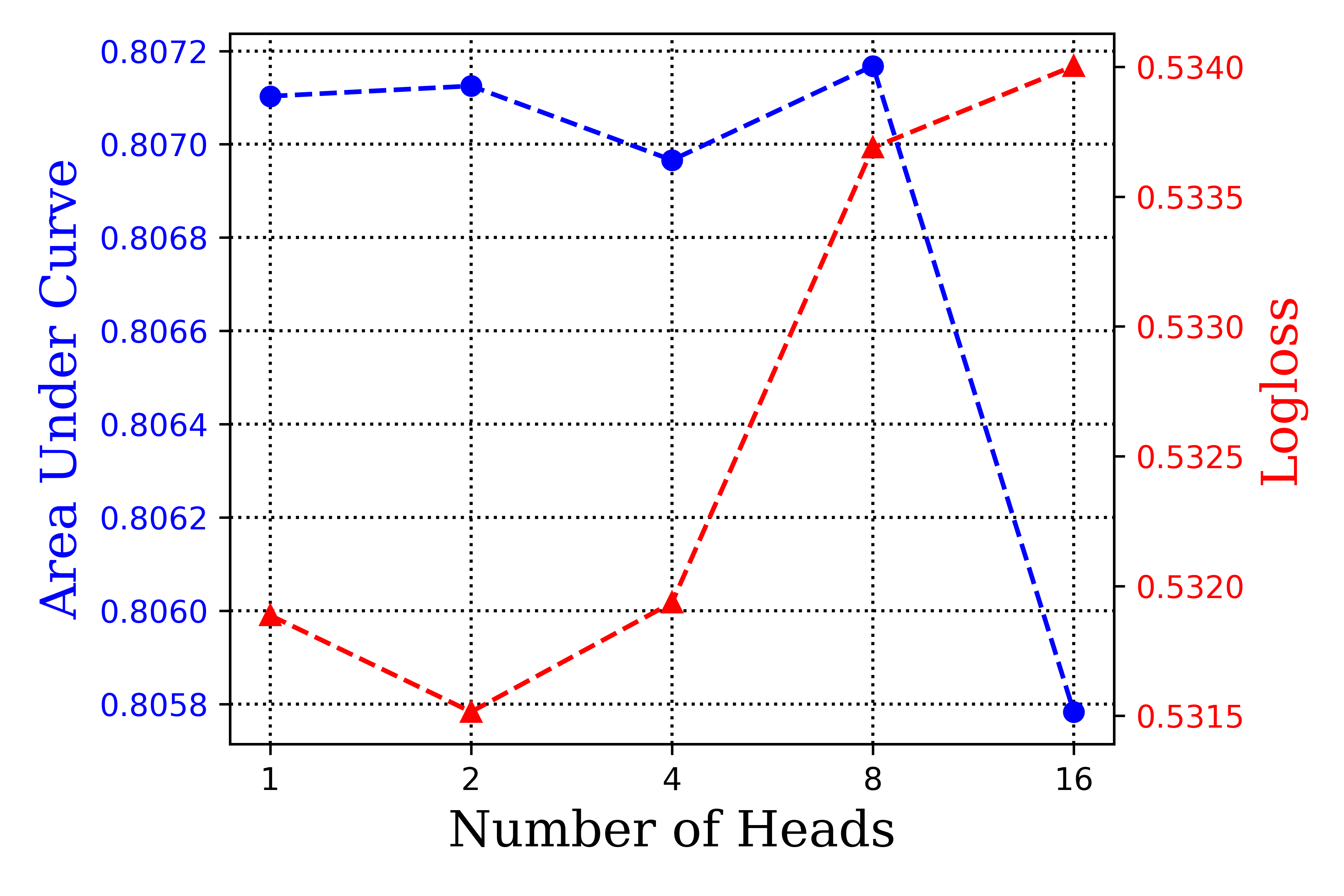}
         \caption{Model performance against the number of heads.}
         \label{fig:movie_heads}
     \end{subfigure}
     \hfill
     \begin{subfigure}[b]{0.45\linewidth}
         \centering
         \includegraphics[width=\textwidth]{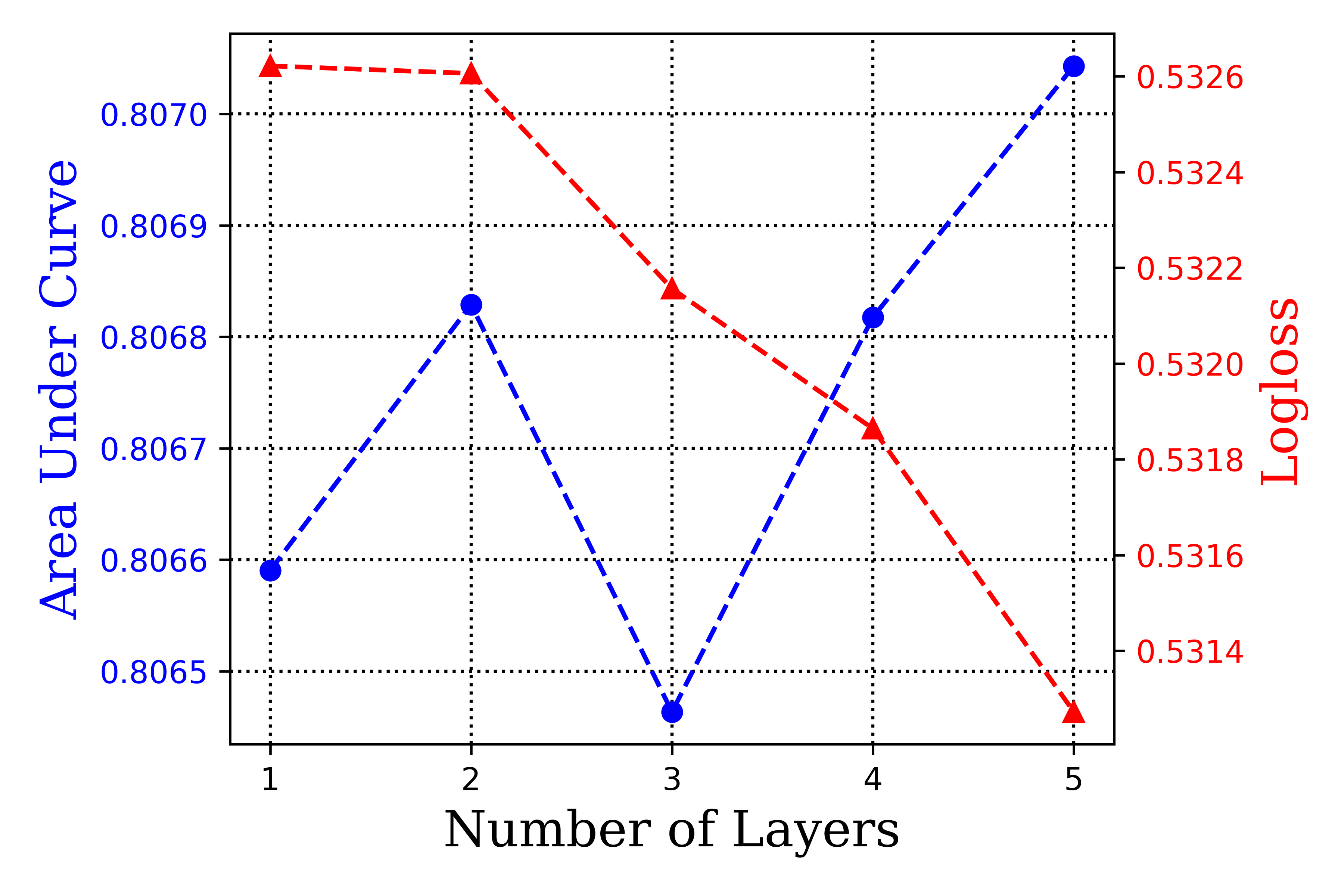}
         \caption{Model performance against the depth of network.}
         \label{fig:movie_layers}
     \end{subfigure}
        \vspace{-0.2cm}
        \caption{Results on the MovieLens dataset.}
        \label{fig:movielens}
        \vspace{-0.2cm}
\end{figure}

\subsubsection{Depth of Network}
The depth of network controls the upper bound degree of cross features. The deeper the network is, the higher order of feature interactions are captured by the model. Meanwhile, a deeper network involves more computational complexity which may cause overfitting. To investigate the impact of number of layers, we increase the depth from 1 to 5 and report the AUC and Logloss results on Avazu and MovieLens datasets. We also visualize the model performance against number of layers in \figurename\ \ref{fig:avazu_layers} and \figurename\ \ref{fig:movie_layers}. We observe that as the depth of network increases, the performance generally increases since the model captures more sophisticated high-order feature interactions. However, model complexity is linearly related to the number of layers and it could be a trade-off between prediction precision and computational requirements. 

\begin{figure}[htbp]
     \centering
     \begin{subfigure}[b]{0.24\linewidth}
         \centering
         \includegraphics[width=\textwidth]{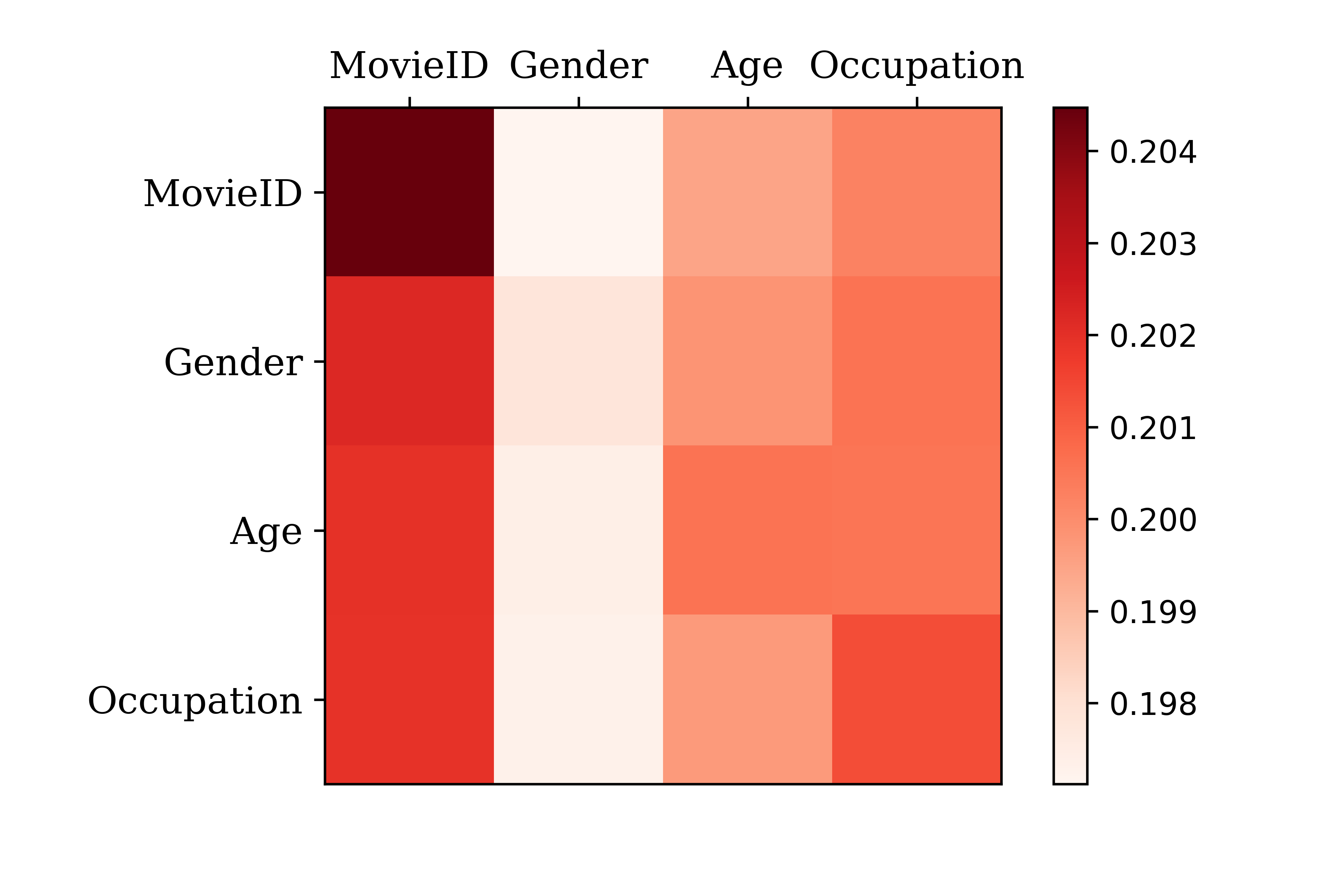}
         \label{fig:head1}
     \end{subfigure}
     \hfill
     \begin{subfigure}[b]{0.24\linewidth}
         \centering
         \includegraphics[width=\textwidth]{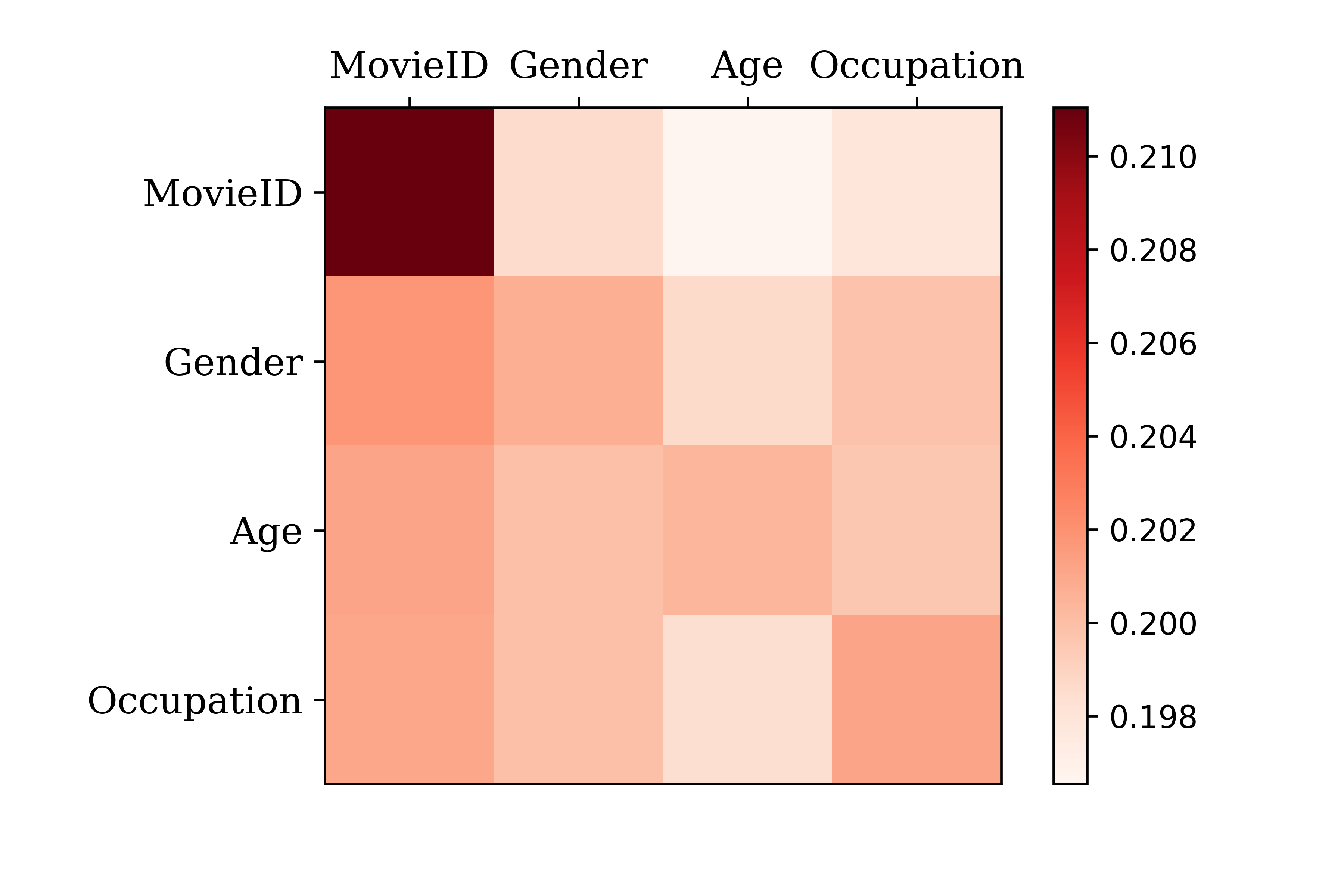}
         \label{fig:head2}
     \end{subfigure}
     \hfill
     \begin{subfigure}[b]{0.24\linewidth}
         \centering
         \includegraphics[width=\textwidth]{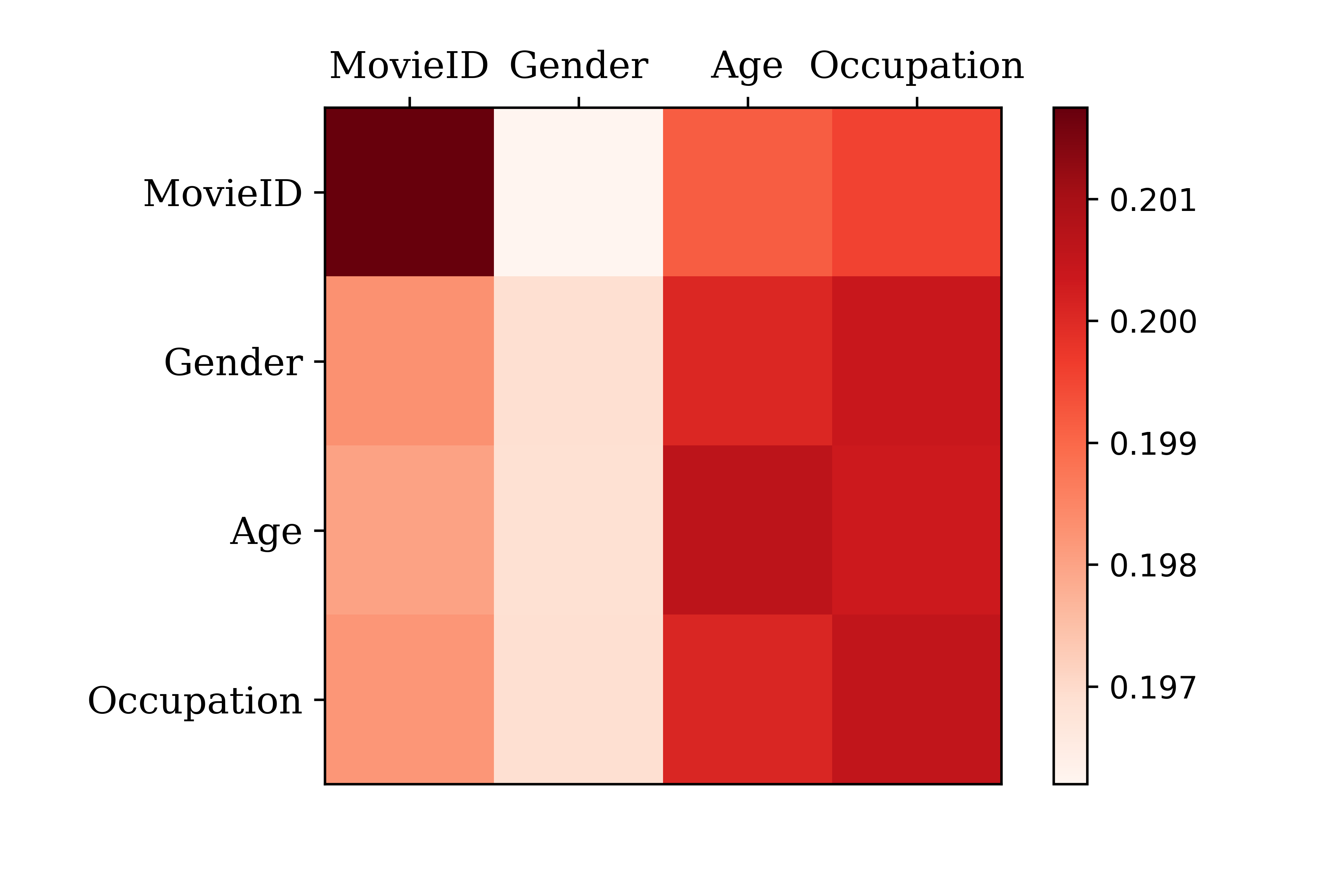}
         \label{fig:head3}
     \end{subfigure}
     \hfill
     \begin{subfigure}[b]{0.24\linewidth}
         \centering
         \includegraphics[width=\textwidth]{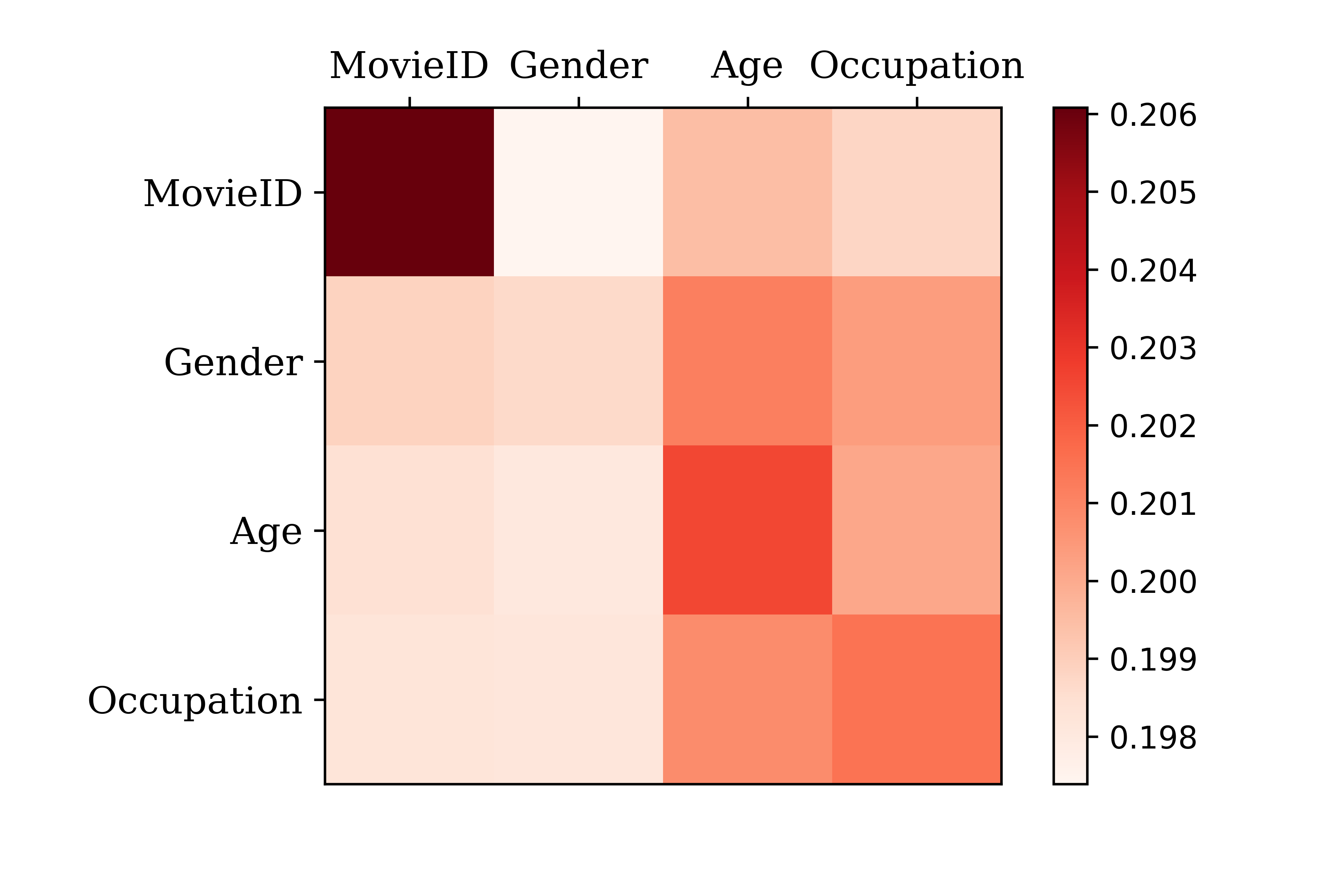}
         \label{fig:head4}
     \end{subfigure}
        \vspace{-0.5cm}
        \caption{The attention pattern of each head on MovieLens dataset.}
        \vspace{-0.6cm}
        \label{fig:multi-head attention}
\end{figure}

\subsection{Is Multi-head Attention Really Necessary? (\textbf{RQ3})}
% \begin{figure}[htbp]
%     \centering
%     \includegraphics[width=.6\linewidth]{}
%     \caption{The architecture of computing attention score used in AFM.}
%     \label{fig:amf}
% \end{figure}
Another straightforward question is that is multi-head attention crucial for our model? Unlike the multi-head attention mechanism, Attentional Factorization Machine (AFM) generally applies a DNN on the inner product of feature embeddings followed by a softmax function to compute the attention scores, allowing different cross features contribute differently to the prediction. 
% For clarity, we illustrate the basic architecture of computing attention scores in AFM in \figurename\ \ref{fig:amf}. 

Since multi-head attention is more computationally expensive, we further investigate whether this multi-head attention mechanism is necessary for user response prediction tasks. We conduct experiments on replacing the multi-head attention with this less complicated attention mechanism proposed by AFM and compare it with our model. \tablename\ \ref{tab:without multihead} shows the results of comparison on two datasets, Avazu and MovieLens-1M. We let \textit{w/} represent the original multi-head attention mechanism used in our model, while \textit{w/o} stands for the attention mechanism used in AFM. As we can see, the original DCAP model consistently outperforms the model with the attention mechanism used in AFM, which further demonstrates the superiority of multi-head attention mechanism in modeling pairwise feature relevance. 

To better understand the importance of multi-head attention, we also investigate the patterns learned by each attention \textit{head}. We select out four feature fields \textit{MovieID}, \textit{Gender}, \textit{Age} and \textit{Occupation} from the MovieLens dataset and visualize the attention pattern of each attention \textit{head} (four in all). As \figurename\ \ref{fig:multi-head attention} shows, the color darkness represents the degree of correlation or attention weights between pairwise feature fields. We observe that different heads attend to different parts of the feature fields as the first \textit{head} concentrates more on the cross correlation between \textit{MovieID} and other fields (left half). In contrast, the third \textit{head} focuses on the cross dependencies between \textit{Age}, \textit{Occupation} and other features (right half). It is also obvious that \textit{Gender} is the least attentional feature in predicting whether a user will watch a certain movie, which is reasonable to some extent. 

The great interpretability of our model gives much more room for data engineers to analyze user behaviors further. More targeting strategies could be made based on these results to improve the user experience or expand the market.

\begin{table}
\caption{The performance comparison between multi-head attention and AFM attention mechanism.}
\label{tab:without multihead}
\vspace{-0.2cm}
\resizebox{\linewidth}{!}{%
\begin{tabular}{@{}llcc@{}}
\toprule
Datasets    & Models       & AUC & Losloss \\ \midrule
\multirow{3}{*}{Avazu}        & DCAP$_{w/}$  &   \textbf{0.7861+/-0.0003}  &   \textbf{0.3754+/-0.0002}      \\
             & DCAP$_{w/o}$ &   0.7830+/-0.0004  &    0.3818+/-0.0005     \\
\multirow{3}{*}{MovieLens-1M} & DCAP$_{w/}$  &  \textbf{0.8066+/-0.0012} &    \textbf{0.5260+/-0.0013}     \\
             & DCAP$_{w/o}$ &  0.8043+/-0.0015 & 0.5290+/-0.0015   \\ \bottomrule
\end{tabular}%
}
\vspace{-0.5cm}
\end{table}

\section{Conclusion}
\label{conclusion}
This paper proposed a novel model Deep Cross Attentional Product Network (DCAP), which can learn high-order feature interactions explicitly at the vector-wise level. It also had the capability of differentiating the importance of all cross features of any degree of order. The core of our models consisted of three parts: (a) multi-head attention for capturing pairwise feature relevance
Product of features across layers for modeling high-order cross features; (b)
concatenation of output from each layer for retaining information of all feature interactions at any order; (c) experimental results on three real-world datasets demonstrate the effectiveness and superiority of our model in predicting user response. Additionally, we demonstrated our model interpretability via visualizing the feature correlations of different attention heads. After incorporating DNNs for modeling implicit feature interactions, our model consistently achieved better AUC and Logloss scores than state-of-the-art models. 

For future work, we will further explore the other attention variants with less computation complexity. Recent work such as Reformer \cite{Kitaev2020} or Linformer \cite{Wang2020} have provided a faster version of multi-head attention with only $\mathcal{O}(n\sqrt{n})$ and $\mathcal{O}(n)$ complexity, respectively. Besides the attention blocks, we are also interested in redesigning computing feature production in a more efficient way such that the model can be trained on GPU clusters distributively.

%%
%% The next two lines define the bibliography style to be used, and
%% the bibliography file.
\bibliographystyle{ACM-Reference-Format}
\bibliography{acmart}

%%
%% If your work has an appendix, this is the place to put it.
\appendix

\end{document}